%% file: neurips_2026.tex
\documentclass{article}

\PassOptionsToPackage{numbers, compress}{natbib}
\usepackage[main, final]{neurips_2026}
\usepackage{microtype}
\usepackage{graphicx}
\usepackage{subcaption}
\usepackage{booktabs} % for professional tables

\input{math_commands.tex}

\usepackage{amsmath}
\usepackage{amssymb}

\usepackage{mathtools}
\usepackage{algorithm}
\usepackage{algorithmic}
\usepackage{amsthm}

\usepackage{fontawesome5}
\usepackage{colortbl}
\usepackage{xcolor}
\arrayrulecolor{black}
\usepackage{url}
\usepackage{booktabs}
\usepackage{enumitem}
\usepackage{amsmath}
\usepackage{amsthm}
\usepackage{amssymb}
\usepackage{natbib}
\usepackage{xspace}
\usepackage{adjustbox}
\usepackage{multirow}
\usepackage{wrapfig}
\usepackage{pifont}
\usepackage{sidecap}
\usepackage{graphicx}
\usepackage{colortbl} % To add color to v-lines
\usepackage{caption}        % 支持 captionof 用于 wrapfigure 
\usepackage{soul}
\usepackage{makecell}  % 放在导言区

\theoremstyle{plain}
\newtheorem{theorem}{Theorem}[section]
\newtheorem{proposition}[theorem]{Proposition}

\theoremstyle{definition}

\theoremstyle{remark}

\usepackage[textsize=tiny]{todonotes}
\definecolor{midblue}{rgb}{0.21,0.49,0.74}
\definecolor{alicegreen}{rgb}{0.90,1.0,0.90}
\definecolor{alicered}{rgb}{1.0, 0.90, 0.90}
\definecolor{aliceblue}{rgb}{0.93, 0.93, 1.0}
\newcommand{\bandzero}{\rowcolor{gray!5}}

\definecolor{lightYellow}{HTML}{FFF8DC}  
\definecolor{cornellred}{rgb}{0.7, 0.11, 0.11}
\definecolor{cadmiumgreen}{rgb}{0.0, 0.42, 0.24}
\definecolor{darkblue}{rgb}{0.83, 0.89, 0.97}
\definecolor{Red7}{rgb}{0.941, 0.243, 0.243}
\definecolor{Green7}{RGB}{55, 178, 77} 
\definecolor{Blue9}{rgb}{0.098,0.3,0.9}
\definecolor{customyellow}{HTML}{FFFCF5}
\definecolor{customred}{HTML}{FFF9F9}
\definecolor{customblue}{HTML}{F9FCFF}
\definecolor{customgreen}{HTML}{F9FFF9}
\definecolor{ForestGreen}{rgb}{0.13, 0.55, 0.13}
\definecolor{ForestBlue}{rgb}{0.13, 0.45, 0.80}
\definecolor{ForestRed}{rgb}{0.92, 0.2, 0.2}
\newcommand{\textrd}[1]{\text{\textcolor{ForestRed}{#1}}}
\newcommand{\textgr}[1]{\text{\textcolor{ForestGreen}{#1}}}
\newcommand{\textbl}[1]{\text{\textcolor{ForestBlue}{#1}}}
\usepackage{pifont}% http://ctan.org/pkg/pifont
\newcommand{\cmark}{\ding{51}}%
\newcommand{\xmark}{\ding{55}}%

\usepackage[utf8]{inputenc} % allow utf-8 input
\usepackage[T1]{fontenc}    % use 8-bit T1 fonts
\usepackage{url}            % simple URL typesetting
\usepackage{booktabs}       % professional-quality tables
\usepackage{amsfonts}       % blackboard math symbols
\usepackage{nicefrac}       % compact symbols for 1/2, etc.
\usepackage{microtype}      % microtypography
\usepackage{xcolor}
\definecolor{linkpink}{RGB}{210, 80, 140}
\usepackage[colorlinks=true, urlcolor=linkpink, citecolor=ForestGreen]{hyperref}
\usepackage[capitalize,noabbrev]{cleveref}
\title{Binding Multiple Modalities via Multimodal Wasserstein Barycenter}

\author{
	Xiaole Tang \quad 
	Jiayi Xu  \quad
	Xiang Gu  \quad
	Yan Yang \quad
	Jian Sun\thanks{Corresponding authors.}\\ \\
 Xi'an Jiaotong University
}

\begin{document}

	\maketitle

	\begin{abstract}
		Multimodal learning beyond two modalities commonly leverages a specific modality  (e.g., text) to bind other modalities. However, how to establish a more balanced representation space that approximates shared semantics while respecting the holistic geometry of $n$-modal data remains challenging. In this work, we present BaryBind, which aims to transport the specific modality towards the Wasserstein barycenter (WB) optimized across all modalities and introduces a volumetric alignment objective to establish a unified semantic space around the WB embedding. Specifically, we project specific modalities to the WB, which minimizes the average Wasserstein distances to multimodal distributions and serves as the anchor for subsequent alignment. We then construct a barycenter simplex, whose volume is taken as a similarity metric for global alignment centered at the WB.  Experiments show that BaryBind achieves competitive performance in text-video-audio retrieval, classification, videoQA, and cross-modal generation tasks, along with robustness under modality absence and scalability to more than three modalities. \href{https://github.com/xl-tang3/barybind}{{\textcolor{linkpink}\faGithub}}
	\end{abstract}
	
	\begin{figure}[!h]
		\centering
		\includegraphics[width=1\linewidth]{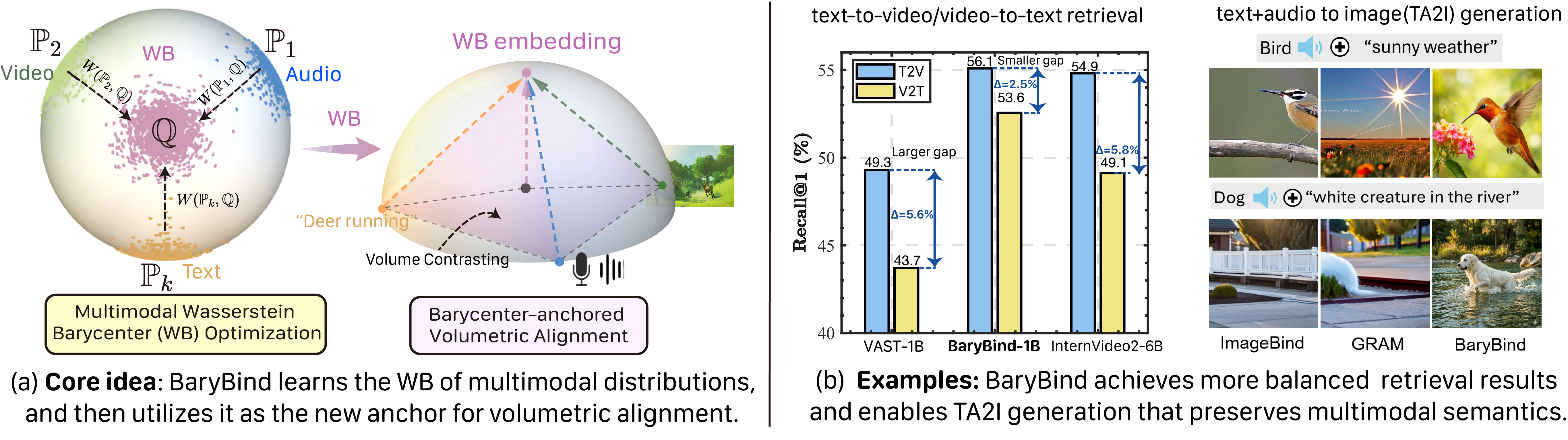}
		\caption{BaryBind learns the multimodal Wasserstein Barycenter (WB) of all modalities via WB optimization, and the WB serves as the anchor to bind modalities via volumetric alignment based on the barycenter simplex volume. Notably, BaryBind yields more balanced multimodal retrieval results and enables TA2I generation that well preserves both text and audio semantics. }
		\label{teaser}
		\vspace{-0.4cm}
	\end{figure}
	\section{Introduction}
	\vspace{-0.2cm}
	Multimodal learning \citep{baltruvsaitis2018multimodal} seeks to integrate and process heterogeneous signals from multiple modalities (e.g., vision, language, audio, depth, \textit{etc.}) to build a coherent perception of the surrounding world. Since multimodal data are heterogeneous observations of a shared underlying reality/object, recent multimodal learning methods \citep{Radford2021LearningTV, jia2021scaling} learn a shared representation from different modalities. In this field, the success of CLIP \citep{Radford2021LearningTV} in aligning unified vision–language representations via contrastive learning has sparked the adoption of contrastive losses as an appealing solution for multimodal representation learning. However, traditional contrastive losses, e.g., InfoNCE \citep{oord2018representation} and BYOL \citep{grill2020bootstrap}, are formulated in a pairwise fashion for $n=2$  cases typically arising from two modalities such as image–text \citep{Radford2021LearningTV, jia2021scaling} or audio–text \citep{guzhov2022audioclip, elizalde2023clap} scenarios. 
	
	While scaling to $n$ modalities ($n \geq 3$) poses significant challenges,  a series of recent works \citep{Zhu2023LanguageBindEV, wang2025omnibind} originating from ImageBind \citep{Girdhar2023ImageBindOE} leverage the binding property of a modality-specific anchor (e.g., image or language) to align other modalities via pairwise losses. Recently, VAST \cite{Chen2023VASTAV}, GRAM \cite{cicchetti2025gramian}, Triangle \cite{cicchetti2025triangle}, and Symile \cite{saporta2024contrasting} relied on global similarity objectives that align $n$ modalities to a designated central modality, also known as the modality-specific anchor. However, such anchors can introduce modality biases, inducing an imbalanced representation space that deviates from the shared underlying semantics, with one modality dominating the representation space. This can be observed in Fig. \ref{teaser}b, where the VAST model exhibits a large gap in Recall@1 between the text-to-video and video-to-text tasks, indicating potential asymmetry in cross-modal alignment.	 %Moreover, pairwise alignment to anchor omits the interactions among non-anchor modalities, which may insufficiently capture the global geometry within multimodal data.
	
	\begin{figure}[!t]
		\centering
		\includegraphics[width=0.7\linewidth]{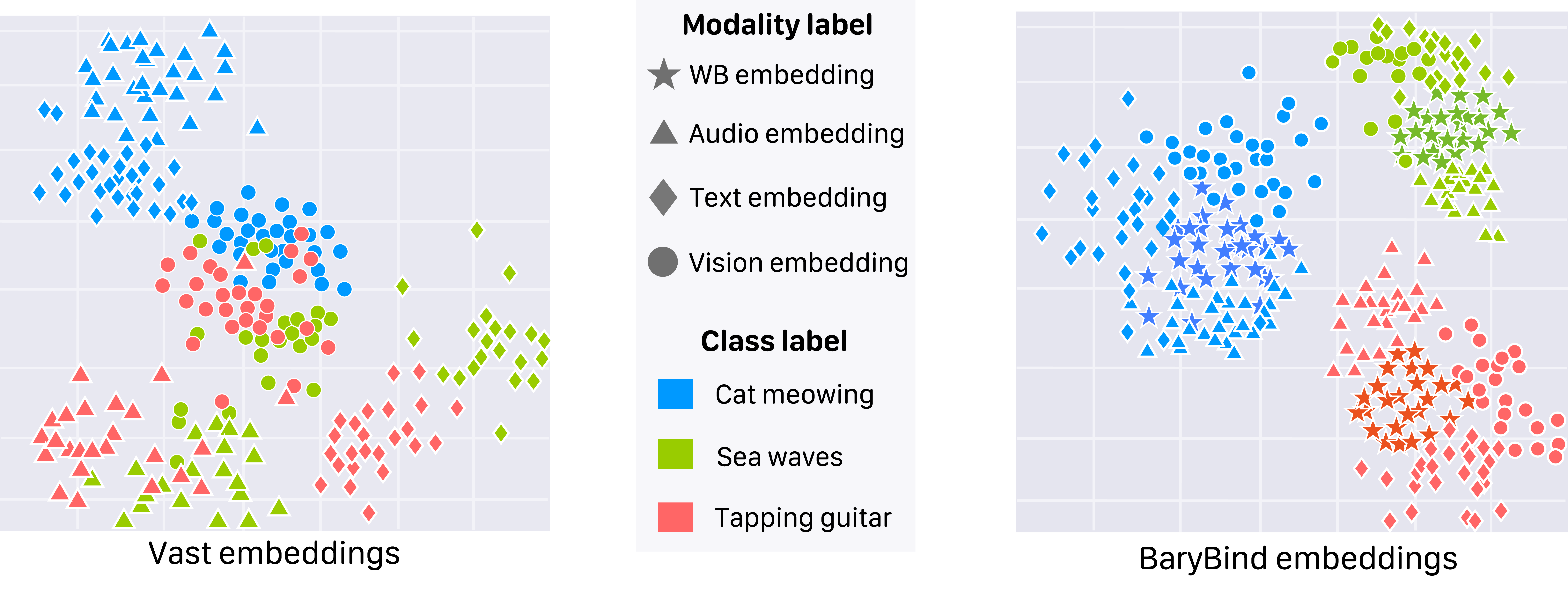}
		\caption{\textbf{The t-SNE comparison of multimodal embeddings on the zero-shot VGGSound dataset} \citep{chen2020vggsound}. Compared to VAST \citep{Chen2023VASTAV}, BaryBind induces a latent space where class clusters are clearly separated, and multimodal embeddings are grouped around the WB embeddings. }
		\label{tsne}
		\vspace{-0.3cm}
	\end{figure}
	
	To alleviate this issue, we are motivated by the view that multimodal data, though captured by heterogeneous sensors, reflect a shared underlying reality/object. Therefore, multimodal representations can be interpreted as modality-specific transformations of a shared representation that encodes semantic consensus. With this insight, we propose \textbf{BaryBind} that aligns multiple modalities to the Wasserstein Barycenter (WB) anchor, which defines a distribution that minimizes the average of Wasserstein distances to all multimodal distributions.  As shown in Fig. \ref{teaser}a, the WB is optimized across all modalities via the multimodal WB optimization process and then utilize as the anchor for subsequent alignment, aiming to mitigate modality-specific bias and encourages semantic consensus (Fig. \ref{tsne}). Based on the WB anchor, we further introduce a volumetric contrastive loss defined on the volume of a barycenter simplex, which quantifies the global alignment of $n$-modal data. By contrasting the simplex's volume (Fig. \ref{teaser}a), BaryBind aligns modalities to the barycenter while preserving the holistic geometry within $n$-modal data. Interestingly, with the barycenter-based alignment, we observe that BaryBind substantially alleviates the imbalance across modalities, reducing the T2V/V2T retrieval gap by 3.1\%  compared to VAST \citep{Chen2023VASTAV} (Fig. \ref{teaser}b), and induces a latent space where modalities of each class are consistently clustered around the WB embeddings (Fig. \ref{tsne}).

	The main contributions are highlighted as follows:
	\begin{itemize}[leftmargin=*,itemsep=0mm]
		\item \textbf{Method:} We propose BaryBind, which transports the specific anchor to the WB that mitigates modality bias as the geometric semantic center. Multimodal embeddings are aligned to the WB via a volumetric alignment objective, enabling balanced multimodal understanding.
		\item \textbf{Theory:} We develop an optimal transport (OT) duality-based optimization framework and error bounds for approximating the Wasserstein barycenter of multimodal representations. Meanwhile, we establish a barycenter simplex, whose volume serves as a global similarity metric and extends the insights of measuring $n$-modal alignment. 
		\item \textbf{Performance:} BaryBind achieves balanced state-of-the-art performance in cross-modal retrieval, generation, multimodal classification, and videoQA tasks across diverse datasets. Notably, BaryBind shows unique robustness under modality absence and scalability to three or more modalities.
	\end{itemize}
	\section{Related Works}
	
	\textbf{Multimodal representation learning.} Multimodal representation learning seeks to align heterogeneous modalities into a shared semantic space. CLIP~\citep{Radford2021LearningTV} initiates this paradigm with image–text contrastive learning, followed by audio extensions such as AudioCLIP~\citep{guzhov2022audioclip}, CLAP~\citep{elizalde2023clap}, and LAION-CLAP~\citep{laionclap2023}. WavCaps~\citep{mei2024wavcaps} constructs a large-scale audio captioning dataset to support audio–language retrieval. To scale beyond two modalities, ImageBind~\citep{Girdhar2023ImageBindOE} introduces a pivot-based strategy, aligning each modality to the vision anchor. LanguageBind~\citep{Zhu2023LanguageBindEV},  UniBind~\citep{lyu2024unibind}, GRAM \citep{cicchetti2025gramian},  Triangle \citep{cicchetti2025triangle}, and LCO-EMB \citep{xiaoscaling}  extend alignment to language anchors, adopting unique volume-based contrastive losses.  OmniBind~\citep{wang2025omnibind} aligns pre-trained unimodal experts via pairwise losses. VAST~\citep{Chen2023VASTAV} fuses modalities into a shared space but still relies on text-centered supervision. 
	
	In contrast, BaryBind alleviates the dominance of the specific anchor modality by explicitly transforming it into the shared WB, which is optimized as the barycenter of all multimodal distributions and acts as the refined anchor. It further leverages a barycenter-anchored volumetric contrastive loss that enables global alignment to the WB embedding, thereby establishing a more balanced semantic space centered at the WB embeddings.

	\textbf{Wasserstein barycenter.} The WB~\citep{agueh2011barycenters, beier2023multi} defines an averaging distribution that minimizes the weighted sum of Wasserstein distances \cite{villani2009optimal,chemseddine2025conditional,piening2026novel} to input measures, preserving mass structure and OT-grounded geometry. This formulation has shown effectiveness on heterogeneous supports and has been applied in generative modeling~\citep{cuturi2014fast} and domain adaptation~\citep{bonneel2015sliced}. Recent works estimate high-dimensional barycenters via deep dual formulations, including ICNN-based cycle-consistent models~\citep{korotin2021continuous}, neural OT maps~\citep{kolesov2024estimating, tang2025baryir, tang2026learning,tang2025degradation}, and energy-guided potentials~\citep{kolesov2024energy}. We extend this perspective to construct a barycenter-aligned joint space for multimodal alignment, bridging optimal transport with multimodal representation learning.
	
	\section{Method}
	\begin{figure*}[!t]
		\centering
		\includegraphics[width=1.0\linewidth]{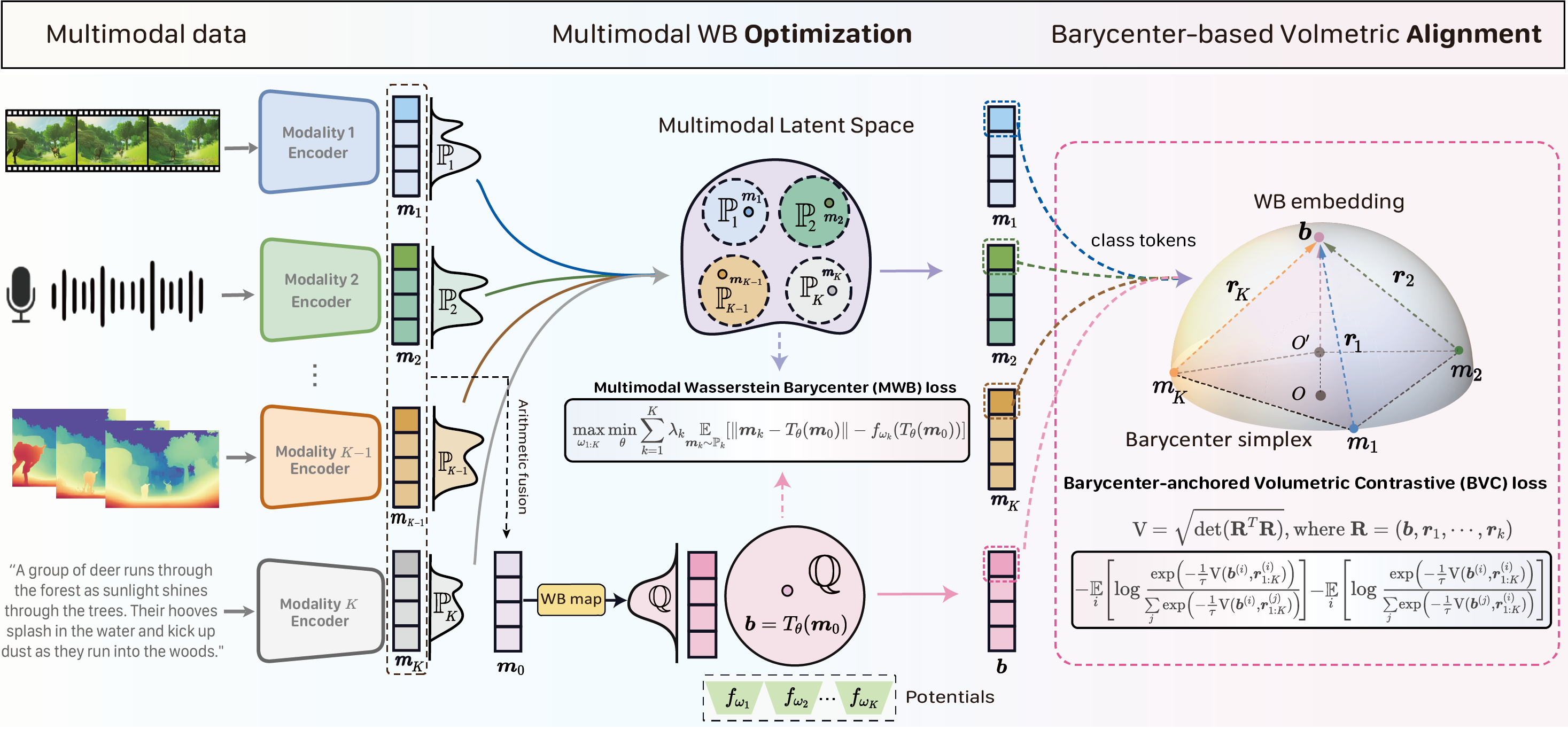}
		\caption{\textbf{BaryBind establishes unified representation space through two processes:} 1) The \textit{WB optimization} process transforms modality-specific initializer $\boldsymbol {m_0}$ into the WB anchor $\boldsymbol b$, in which the WB map $T_\theta$ is optimized by minimizing the average of Wasserstein distances to all multimodal distributions $\mathbb P_k$. 2) The \textit{volumetric alignment} process utilizes the WB anchor to induce a barycenter simplex, whose volume is contrasted to achieve global alignment to the WB embedding.}
		\label{model}
		\vspace{-0.2cm}
	\end{figure*}
	
	\textbf{Overview.} In this work, we introduce BaryBind that aligns multiple modalities to the WB and suggest a volumetric alignment objective to establish a unified representation space. The overview is presented in Fig. \ref{model}, where the WB optimization process (\S\ref{MWBS}) learns a transformation from specific modality to the WB, which serves as the alignment center. The volumetric alignment process (\S\ref{volume}) enables global alignment to the WB anchor by contrasting the volume of the barycenter simplex.
	%	
	%	
	%	Specifically, as shown in Fig. \ref{model}, BaryBind first constructs the multimodal WB space that aligns features from multimodal latent space via the multimodal WB loss (\S\ref{MWBS}),  in which an MLP is learned to transport the anchor features to the WB space as WB embeddings. We then construct a barycenter simplex defined by the WB embeddings and modality-to-barycenter gap vectors, whose volume quantifies the degree of $n$-modality alignment (\S\ref{volume}). Based on the simplex, we introduce a barycenter-anchored volumetric contrastive loss, which encourages high-order multimodal alignment to the WB space while reducing inter-modality gaps (\S\ref{contrast}).
	\subsection{Preliminaries}
	\textbf{Notation.} 
	Let $\bar{K} = \{1, \dots, K\}$ for some $K \in \mathbb{N}$. For a sequence $e_0, \dots, e_K$, we denote by $e_{1:K}$ the tuple $(e_0, \dots, e_K)$. Let $\mathcal{X} \subset \mathbb{R}^d$, $\mathcal{Y} \subset \mathbb{R}^{d'}$, and $\mathcal{X}_k \subset \mathbb{R}^{d_k}$ be compact subsets of Euclidean space. Denote by $\mathcal{C}(\mathcal{X})$ the space of continuous real-valued functions on $\mathcal{X}$, and by $\mathcal{P}(\mathcal{X})$ the set of probability measures supported on $\mathcal{X}$. Given $\mathbb{P} \in \mathcal{P}(\mathcal{X})$ and $\mathbb{Q} \in \mathcal{P}(\mathcal{Y})$, we write $\Pi(\mathbb{P}, \mathbb{Q})$ for the set of transport plans between them, i.e., all joint distributions on $\mathcal{X} \times \mathcal{Y}$ whose marginals are $\mathbb{P}$ and $\mathbb{Q}$. The notation $\langle \cdot, \cdot \rangle$ denotes the cosine similarity over features.
	
	\textbf{Optimal transport.} Given two distributions $\mathbb{P} \in \mathcal{P}(\mathcal{X})$ and $\mathbb{Q} \in \mathcal{P}(\mathcal{Y})$, along with a cost function $c : \mathcal{X} \times \mathcal{Y} \to \mathbb{R}_+$, the classic optimal transport (OT) problem \citep{kantorovich1942translocation} aims to find a joint distribution $\pi \in \Pi(\mathbb{P}, \mathbb{Q})$ that minimizes the expected transport cost:
	\begin{align}
		\text{OT}_c(\mathbb{P},\mathbb{Q})\triangleq\inf_{\pi\in\Pi(\mathbb P,\mathbb Q)}\mathbb E_{(x,y)\sim\pi}\left[c(x,y)\right].
		\label{Kon}
	\end{align}
	The specific choice of $c(x,y)=\|x-y\|$ yields $W(\mathbb P,\mathbb Q)=\inf_{\pi\in\Pi(\mathbb P,\mathbb Q)}\mathbb E_{(x,y)\sim\pi}\|x-y\|$, known as the Earth-Mover or Wasserstein distance.
	
	\textbf{Wasserstein barycenter (WB).} Given distributions $\mathbb P_k\in\mathcal P(\mathcal X_k)$ for $k\in\bar K$ and a vector $\lambda\in\mathbb R^{K+1}$ of non-negative weights $\lambda_k$ summing to 1, the WB problem seeks the distribution $\mathbb Q$ that minimizes the weighted sum of Wasserstein distances to the fixed marginals $\mathbb P_{1:K}$:
	\begin{align}
		\inf_{\mathbb Q\in\mathcal P(\mathcal Y)}\sum_{k=1}^{K}\lambda_kW(\mathbb P_k, \mathbb Q). \label{WB}
	\end{align}
	We apply the WB formulation to the multimodal latent space $\mathcal{M}$, where the encoded features of the $k$-th modality lie in a subspace $\mathcal{M}_k$ and follow a distribution $\mathbb{P}_k$. The WB approximates the geometric consensus of semantics as it models the ``closest'' distribution to all multimodal distributions. 
	
	\subsection{Multimodal Wasserstein Barycenter Optimization}
	\label{MWBS}
	Let $\mathbb{P}_k$ be the distribution of encoded features $\boldsymbol m_k\in\mathcal M_k\subset\mathcal M$ for modality $k \in \bar K$, defined in the multimodal latent space $\mathcal{M}\subset\mathbb R^D$. $\boldsymbol m_0$ denotes the anchor modality feature. The WB space is defined as $\mathcal M_B:=\text{supp}(\mathbb Q)$ where $\mathbb Q$ denotes the WB distribution and  $\mathcal M_B$ contains the barycenter features $\boldsymbol b$. Given the distributions $\mathbb{P}_{1:K}$, our goal is to establish the barycenter space $(\mathcal M_B, \mathbb Q)$ and use it as the joint representation space for multimodal alignment. Based on the WB formulation (\ref{WB}) over multimodal latent space, the multimodal WB problem can be written as
	\begin{align}
		\mathcal L_{\mathrm{MWB}}^*=\inf_{\mathbb Q\in\mathcal P(\mathcal M_B)}\sum_{k=1}^{K}\lambda_kW(\mathbb P_k, \mathbb Q). \label{MWB}
	\end{align} 
	This formulation induces a barycenter distribution that minimizes the average of Wasserstein distances to all modalities. However, directly optimizing (\ref{MWB}) is highly intractable. Therefore, we deduce a new dual reformulation (see Appendix \ref{pf1}) with sup-inf objective as follows.
	
	\begin{proposition}[\textbf{Duality of multimodal WB problem}]The infimum  value $\mathcal{L}^*_{\mathrm{MWB}}$ of the multimodal WB problem (\ref{MWB}) can be expressed as	
		\label{prop1}
		\begin{align}
			\mathcal L^*_{\mathrm{MWB}}=\sup\limits_{\sum_k\!\lambda_k f_k = 0}~\inf_{\substack{\\[0.1ex]\mathbb Q\in\mathcal P(\mathcal M_B)}}\sum_{k=1}^K\lambda_k\mathop{\mathbb E}_{\substack{\boldsymbol m_k\sim\mathbb P_k\\\boldsymbol b\sim\mathbb Q}}\big[\|\boldsymbol m_k-\boldsymbol b\|-f_k(\boldsymbol b)\big],
			\label{duality}
		\end{align}
	\end{proposition}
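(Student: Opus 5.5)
The plan is a Lagrangian relaxation of the marginal-matching constraint, followed by a minimax exchange. I read the inner expectation in \eqref{duality} as an expectation over a coupling $\pi_k$ of $\boldsymbol m_k\sim\mathbb P_k$ and $\boldsymbol b$. This is the reading under which the WB map $T_\theta$ is later trained, where the coupling is induced by a map. Under the literal product-measure reading the statement fails: the constraint $\sum_k\lambda_k f_k=0$ makes the $f_k$-terms cancel, and the right-hand side collapses to $\inf_{\mathbb Q}\sum_k\lambda_k\mathbb E_{\mathbb P_k\otimes\mathbb Q}\|\boldsymbol m_k-\boldsymbol b\|$. That quantity is generally strictly larger than $\mathcal L^*_{\mathrm{MWB}}$.

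\textbf{Step 1 (primal as a constrained plan problem).} Write $W(\mathbb P_k,\mathbb Q)=\inf_{\pi_k\in\Pi(\mathbb P_k,\mathbb Q)}\mathbb E_{\pi_k}\|\boldsymbol m_k-\boldsymbol b\|$, so that
\begin{align*}
\mathcal L^*_{\mathrm{MWB}}=\inf\Big\{\textstyle\sum_k\lambda_k\mathbb E_{\pi_k}\|\boldsymbol m_k-\boldsymbol b\| \,:\, \pi_k\in\mathcal P(\mathcal M_k\times\mathcal M_B),\ \pi_k^{(1)}=\mathbb P_k,\ \pi_k^{(2)}=\mathbb Q_k,\ \mathbb Q_1=\dots=\mathbb Q_K\Big\}.
\end{align*}
Here each plan carries its own second marginal $\mathbb Q_k$, and the barycenter appears only through the requirement that all $\mathbb Q_k$ coincide.

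\textbf{Step 2 (penalize the consensus constraint).} For continuous $f_k\in\mathcal C(\mathcal M_B)$ with $\sum_k\lambda_k f_k=0$, I claim
\begin{align*}
\sup_{\sum_k\lambda_kf_k=0}\ -\textstyle\sum_k\lambda_k\int f_k\,d\mathbb Q_k
\end{align*}
equals $0$ when all $\mathbb Q_k$ are equal and $+\infty$ otherwise.
\begin{itemize}
\item If all $\mathbb Q_k$ equal $\mathbb Q$, the sum is $\int\sum_k\lambda_kf_k\,d\mathbb Q=0$.
\item If $\mathbb Q_i\neq\mathbb Q_j$, choose a continuous $g$ with $\int g\,d\mathbb Q_i\neq\int g\,d\mathbb Q_j$. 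Set $f_i=tg/\lambda_i$, $f_j=-tg/\lambda_j$ and all other $f_k=0$. This satisfies the constraint, and letting $t\to\pm\infty$ drives the sum to $+\infty$. Weights with $\lambda_k=0$ are simply dropped.
\end{itemize}
Hence $\mathcal L^*_{\mathrm{MWB}}=\inf_{\pi}\sup_{f}\sum_k\lambda_k\mathbb E_{\pi_k}[\|\boldsymbol m_k-\boldsymbol b\|-f_k(\boldsymbol b)]$, where the infimum runs over plans with first marginals $\mathbb P_k$ and free second marginals.

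\textbf{Step 3 (exchange inf and sup).} I expect this to be the main obstacle. The set of admissible $(\pi_1,\dots,\pi_K)$ is convex and weak-$*$ compact, because $\mathcal M_k$ and $\mathcal M_B$ are compact and the marginal constraint is closed. The objective is affine and weak-$*$ continuous in $\pi$, since the cost and the $f_k$ are continuous, and it is affine in $f$ on the linear subspace $\{\sum_k\lambda_kf_k=0\}$. Sion's minimax theorem, with compactness on the $\pi$ side, then justifies swapping to $\sup_f\inf_\pi$. I would double-check lower semicontinuity carefully here; if needed, I would restrict to Lipschitz $f_k$ and pass to the closure.

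\textbf{Step 4 (recover the stated form).} For fixed $f$, the inner infimum over plans decouples across $k$ and equals $\sum_k\lambda_k\mathbb E_{\mathbb P_k}[f_k^c(\boldsymbol m_k)]$, with $f^c(\boldsymbol m)=\min_{\boldsymbol b}\{\|\boldsymbol m-\boldsymbol b\|-f(\boldsymbol b)\}$. This reproduces the congruent-potential dual of Agueh--Carlier. The infimum is attained by plans concentrated on measurable argmin selections $\boldsymbol b=T_k(\boldsymbol m_k)$. Re-indexing this infimum through the induced distribution $\mathbb Q$ of $\boldsymbol b$ gives exactly the sup--inf expression in \eqref{duality}.
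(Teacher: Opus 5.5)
Your proof is correct under the coupling reading you adopt. It reaches the same intermediate identity as the paper, $\mathcal L^*_{\mathrm{MWB}}=\sup_{\sum_k\lambda_kf_k=0}\sum_k\lambda_k\int f_k^{c}\,d\mathbb P_k$, but by a genuinely different route.

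The paper proceeds in four steps:
\begin{enumerate}
\item It dualizes each $W(\mathbb P_k,\mathbb Q)$ separately by Kantorovich duality, with \emph{unconstrained} potentials.
\item It swaps $\inf_{\mathbb Q}$ and $\sup_f$ by Sion, using compactness of $\mathcal P(\mathcal M_B)$, linearity in $\mathbb Q$, and concavity of $f\mapsto f^{c}$.
\item It evaluates $\inf_{\mathbb Q}\int\bar f\,d\mathbb Q=\min\bar f$.
\item It adds a reduction step: replace $f_K$ by $f_K-\bar f/\lambda_K$ and use that the $c$-transform is order-reversing and shifts constants. This shows the supremum may be restricted to congruent potentials.
\end{enumerate}
You instead dualize only the consensus constraint $\mathbb Q_1=\dots=\mathbb Q_K$. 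The congruent potentials then appear directly as its Lagrange multipliers: they are exactly the functionals annihilating the diagonal. Compactness sits on tuples of plans with prescribed first marginals, and the inner infimum is computed by a measurable argmin selection. Your route is self-contained, avoids two-marginal Kantorovich duality, and explains why the constraint $\sum_k\lambda_kf_k=0$ arises. The paper's route is shorter once that duality is taken as a black box.

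Your worry in Step 3 is unnecessary. $\Phi(\cdot,f)$ is weak-$*$ continuous because the integrand is continuous on a compact set, and $\Phi(\pi,\cdot)$ is sup-norm continuous, so Sion applies directly.

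Your diagnosis of the literal product reading is right, and it pinpoints the weak spot in the paper's own argument. Its last step moves $\inf_{\boldsymbol b}$ outside $\int d\mathbb P_k$, then outside $\sum_k$, and finally draws a single $\boldsymbol b\sim\mathbb Q$. That is exactly the product reading.

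Your own last step also needs tightening. The infimum you obtain runs over $K$-tuples of plans, equivalently per-modality maps $T_k(\boldsymbol m_k)$. Their laws $(T_k)_{\#}\mathbb P_k$ generally differ, so there is no single ``induced $\mathbb Q$''. If you force a common second marginal, congruence again cancels the $f_k$-terms and the identity degenerates to the primal problem.

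For the same reason, your remark that this reading matches how $T_\theta$ is trained is inaccurate. The paper uses one shared $\boldsymbol b=T_\theta(\boldsymbol m_0)$ for all $k$. Hence $\sum_k\lambda_kf_{\omega_k}(\boldsymbol b)=0$ pointwise, and the potentials drop out of the MWB loss entirely.
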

	where the supremum is taken over the potentials $f_k:\mathcal M_B\rightarrow\mathbb R$. We aim to learn the distribution $\mathbb Q$ by sampling WB embedding $\boldsymbol b=T(\boldsymbol {m_0})$ via a trainable WB map $T_{\theta}(\cdot):\mathcal M_0\rightarrow \mathcal M_B$. Here, $\boldsymbol m_0$ represents the WB initializer, empirically selected as the arithmetic mean of all available multimodal features, i.e., $\boldsymbol{m_0} = \bar{\boldsymbol m}_i$ and can also be chosen as any modality (see ablation in \textbf{Appendix \ref{sec:anchor_rationale}}). We parameterize $T_\theta$ using an MLP. To ensure the congruence $\sum_k\!\lambda_k f_k = 0$ \citep{li2020continuous}, we parameterize the potentials $f_{\omega_k}$ as $g_{\omega_k}-\sum_{i=1}^K\lambda_ig_{\omega_i}$ with two-layer MLPs $g_{\omega_k}:\mathbb R^D\rightarrow\mathbb R$, which is a common trick used in \citep{li2020continuous,kolesov2024estimating,kolesov2024energy}. 
	
	\textbf{Multimodal Wasserstein barycenter (MWB) loss}. With this parameterization, we rewrite (\ref{duality}) as a max-min objective of the MWB loss $\mathcal L_{\mathrm{MWB}}$, which can be optimized to compute the barycenter map $T_\theta$ for approximating the WB:
	\begin{align}
		\max_{\omega_{1:K}}\min_{\theta}\bigg\{\mathcal L_{\mathrm{MWB}}(\omega_{1:K},\theta)\triangleq\sum_{k=1}^K \lambda_k\mathop{\mathbb E}_{\boldsymbol m_k\sim\mathbb P_k}\big[\|\boldsymbol m_k-T_\theta(\boldsymbol m_0)\|-f_{\omega_k}(T_\theta(\boldsymbol m_0))\big]\bigg\}.
		\label{Lbary}
		%	\mathcal L_{bary}(\omega_k,\theta)=\underbrace
	\end{align}
	To solve the problem (\ref{Lbary}), we train $T_\theta$ and $f_{\omega_{1:K}}$ by alternately maximizing over $\omega_{1:K}$ and minimizing over $\theta$ in the MWB loss, in which we estimate the expectation using mini-batch data at each training step. During the optimization, all modalities jointly update the barycenter, ensuring that no single modality dominates and that the modality bias is reduced. Then the WB embedding is computed as $\boldsymbol b= T_\theta(\boldsymbol m_0)$, serving as the  anchor for subsequent alignment. 
	
	We also establish the error bounds for the map $T$ with the following  simplified notations:
	\begin{align}&\mathcal F(f_{1:K},T):=\mathcal L_{\mathrm{MWB}}(f_{1:K},T),~~~
		\mathcal L(f_{1:K}):=\inf_{T:\mathcal M\rightarrow\mathcal M_B}\mathcal F(f_{1:K},T),~~~\mathcal L^*:=\mathcal L_{\mathrm{MWB}}^*. \label{func2} \end{align}
	
	\begin{theorem}[\normalfont{Error analysis via duality gaps for the barycenter distribution}] Let $C_k$ be any transport costs. Assume that the maps \label{error}
		$\boldsymbol b \mapsto C_k(\boldsymbol m_k, \boldsymbol b) - \widehat{f}_k(\boldsymbol b)$ 
		are $\beta$-strongly convex for $\boldsymbol m_k \in \mathcal{M}_k$, $k \in \{0, \dots, K\}$.
		Consider the duality gaps for an approximate solution $(\widehat f_{1:K}, \widehat T)$:
		\begin{align}
			\mathcal{E}_1(\widehat{f}_{1:K}, \widehat{T}) 
			\triangleq 
			\mathcal{F}(\widehat{f}_{1:K}, \widehat{T})
			-\mathcal{L}(\widehat{f}_{1:K}),~~~~~
			\mathcal{E}_2(\widehat{f}_{1:K})
			\triangleq
			\mathcal{L}^* - \mathcal{L}(\widehat{f}_{1:K}),
		\end{align}
		Then the following inequality holds:
		\begin{equation}
			W^{2}\!\left(\widehat T_{\#}\mathbb P_0, \mathbb Q^*\right)
			\leq \frac{4}{\beta}\big(\mathcal E_1 + \mathcal E_2\big).
		\end{equation}
	\end{theorem}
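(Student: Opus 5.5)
The plan follows the standard argument for error bounds in neural OT and barycenter solvers (in the style of Kolesov et al.). Strong convexity of the inner objective in $\boldsymbol b$ turns each duality gap into a quadratic distance bound. We then combine these through the triangle inequality for $W$ (or for an $L^2$ distance between maps), which produces the factor $4=2\cdot 2$.

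\textbf{Step 1: reformulate the inner problem over maps.} Write $\mathcal F(\widehat f_{1:K},T)=\mathbb E_{\boldsymbol m_0\sim\mathbb P_0}\big[\Phi(\boldsymbol m_0,T(\boldsymbol m_0))\big]$, where
\begin{equation*}
\Phi(\boldsymbol m_0,\boldsymbol b)=\sum_k\lambda_k\,\mathbb E_{\boldsymbol m_k}\big[C_k(\boldsymbol m_k,\boldsymbol b)-\widehat f_k(\boldsymbol b)\big].
\end{equation*}
Since the $\lambda_k$ are nonnegative and sum to one, $\Phi(\boldsymbol m_0,\cdot)$ is $\beta$-strongly convex. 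The infimum $\mathcal L(\widehat f_{1:K})$ is therefore attained by the pointwise minimizer $T^{\widehat f}(\boldsymbol m_0)=\arg\min_{\boldsymbol b}\Phi(\boldsymbol m_0,\boldsymbol b)$, which is unique. We assume $\mathcal M_B$ is convex, so the first-order optimality condition applies. Strong convexity then gives
\begin{equation*}
\Phi(\boldsymbol m_0,\boldsymbol b)-\Phi(\boldsymbol m_0,T^{\widehat f}(\boldsymbol m_0))\ge \tfrac{\beta}{2}\|\boldsymbol b-T^{\widehat f}(\boldsymbol m_0)\|^2 .
\end{equation*}
Integrating with $\boldsymbol b=\widehat T(\boldsymbol m_0)$ yields
\begin{equation*}
\mathcal E_1\ge \tfrac{\beta}{2}\,\mathbb E_{\mathbb P_0}\|\widehat T-T^{\widehat f}\|^2 .
\end{equation*}

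\textbf{Step 2: control $\mathcal E_2$ by comparing with the optimum.} Let $(f^*_{1:K},T^*)$ be an optimal saddle point, with $T^*_\#\mathbb P_0=\mathbb Q^*$. Its existence follows from Proposition~\ref{prop1} together with compactness, and we take it as given. Because the potentials satisfy the congruence $\sum_k\lambda_k \widehat f_k=0$, the potential terms in $\mathcal F(\widehat f,T^*)$ vanish. Hence $\mathcal F(\widehat f,T^*)=\sum_k\lambda_k\mathbb E[C_k(\boldsymbol m_k,T^*(\boldsymbol m_0))]$, and this equals $\mathcal L^*$ provided the optimal map achieves the barycenter cost in the primal, i.e.\ $T^*$ realizes optimal couplings. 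Applying the strong-convexity inequality of Step 1 at $\boldsymbol b=T^*(\boldsymbol m_0)$ then gives
\begin{equation*}
\mathcal E_2=\mathcal F(\widehat f,T^*)-\mathcal L(\widehat f)\ge \tfrac{\beta}{2}\,\mathbb E_{\mathbb P_0}\|T^*-T^{\widehat f}\|^2 .
\end{equation*}
The delicate point is the identity $\mathcal F(\widehat f,T^*)=\mathcal L^*$. It requires that the coupling of $\boldsymbol m_k$ with $\boldsymbol b$ inside the expectation be the one induced by the optimal plans, not the independent product. I expect this to be the main obstacle. I would handle it by reading the expectation in $\mathcal F$ as taken under the plan that the construction implicitly fixes, namely the gluing through $\boldsymbol m_0$, so that the cost term evaluated at the optimum equals $\mathcal L^*$. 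Failing that, I would show the weaker inequality $\mathcal F(\widehat f,T^*)\le\mathcal L^*$, which is the only direction the argument needs.

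\textbf{Step 3: combine.} Any map pushes $\mathbb P_0$ to its image, so $(\widehat T,T^*)_\#\mathbb P_0$ is a coupling of $\widehat T_\#\mathbb P_0$ and $\mathbb Q^*$. By Jensen's inequality,
\begin{equation*}
W^2(\widehat T_\#\mathbb P_0,\mathbb Q^*)\le \big(\mathbb E\|\widehat T-T^*\|\big)^2\le\mathbb E\|\widehat T-T^*\|^2 .
\end{equation*}
Using $\|a+c\|^2\le 2\|a\|^2+2\|c\|^2$ with the intermediate map $T^{\widehat f}$, we get
\begin{equation*}
\mathbb E\|\widehat T-T^*\|^2\le 2\cdot\tfrac{2}{\beta}\mathcal E_1+2\cdot\tfrac{2}{\beta}\mathcal E_2=\tfrac{4}{\beta}(\mathcal E_1+\mathcal E_2),
\end{equation*}
which is the claimed bound.
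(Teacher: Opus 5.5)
Your Steps 1--3 follow the paper's proof. The paper also bounds $\mathcal E_1\ge\frac{\beta}{2}\mathbb E\|\widehat T-T^{\widehat f}\|^2$ and $\mathcal E_2\ge\frac{\beta}{2}\mathbb E\|T^{\widehat f}-T^*\|^2$ around the same intermediate minimizer. It then uses congruence to write $\mathcal L^*=\mathcal F(\widehat f_{1:K},T^*)$, and finishes with $\|a+c\|^2\le 2\|a\|^2+2\|c\|^2$ and the coupling $(\widehat T,T^*)_\#\mathbb P_0$. Your pointwise treatment of $\Phi(\boldsymbol m_0,\cdot)$, with $\mathbb E_{\boldsymbol m_k}$ read as conditional on $\boldsymbol m_0$, is tidier than the paper's. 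The paper states the first-order condition as an unconditional expectation, and that alone does not kill the cross term against $\widehat T(\boldsymbol m_0)-T^{\widehat f}(\boldsymbol m_0)$. Your quadratic-growth inequality on a convex $\mathcal M_B$ does, even at boundary minimizers.

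The step you flag is where the argument stops being a proof, and the paper does not prove it either. It writes $\mathcal L^*=\sum_k\lambda_k\int C_k(\boldsymbol m_k,T^*(\boldsymbol m_0))\,d\mathbb P_k(\boldsymbol m_k)$ and calls this Monge's formulation, which is your gluing reading taken as given. Your fallback cannot work, because it is not weaker. Under the data law, $(\boldsymbol m_k,T^*(\boldsymbol m_0))$ is a coupling of $\mathbb P_k$ and $\mathbb Q^*$, so each cost term is at least $\mathrm{OT}_{C_k}(\mathbb P_k,\mathbb Q^*)$. Hence $\mathcal F(\widehat f_{1:K},T^*)\ge\mathcal L^*$ always, and the inequality $\mathcal F(\widehat f_{1:K},T^*)\le\mathcal L^*$ is equivalent to the identity itself. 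That means every data pairing $(\boldsymbol m_k,T^*(\boldsymbol m_0))$ must be $C_k$-optimal, and this has to be added as a hypothesis; it cannot be derived.

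Here is why it cannot be derived. The congruent potentials are all evaluated at the same point $T(\boldsymbol m_0)$, so they cancel pointwise. The same coupling argument then gives $\mathcal F(\widehat f_{1:K},T)\ge\mathcal L^*$ for every $T$, so $\mathcal E_2\le 0$ in general. If the pairing is suboptimal, $\mathcal E_2<0$, and taking $\widehat T=T^{\widehat f}$ (so $\mathcal E_1=0$) violates the stated bound. A concrete instance: quadratic costs, $\widehat f_k\equiv 0$, $\beta=2$, all $\mathbb P_k$ equal and nondegenerate, $\mathbb P_0$ atomless, and $\boldsymbol m_0$ independent of the $\boldsymbol m_k$.

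Under the hypothesis, $\mathcal E_2=0$ and $T^{\widehat f}=T^*$, so your argument even gives $W^2\le\frac{2}{\beta}\mathcal E_1$. The identity is automatic only in the per-modality-map formulation of Kolesov et al. There, $\mathcal L^*=\sum_k\lambda_k\int C_k(\boldsymbol x,T_k^*(\boldsymbol x))\,d\mathbb P_k(\boldsymbol x)$ is a genuine Monge representation. A single map applied to paired data loses that property.
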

	\noindent The proof is provided in the \textbf{Appendix \ref{pf2}}. This theorem ensures that the Wasserstein distance between the estimated distribution $\widehat{T}_{\#}\mathbb{P}_0$ and the true barycenter $\mathbb{Q}^*$ is upper-bounded by the sum of these two errors. In our setting with Euclidean cost, this local condition can hold when the learned neural potentials are sufficiently smooth. In practice, regularization (e.g., weight decay) can improve the smoothness of neural potentials, and our experiments demonstrate the effectiveness of the proposed Wasserstein barycenter solver. Our experiments also demonstrate that the WB solver performs well with Euclidean cost.

	\subsection{Barycenter-anchored Volumetric Alignment} 
	\label{volume}
	
	To bind modalities to the WB anchor while preserving the global structures of \(K\)-dimensional multimodal data, we introduce a geometric structure called the \textit{barycenter simplex}. As illustrated in Fig.~\ref{model}, the simplex takes the WB embedding \(\boldsymbol{b}\) as the apex and is spanned by the vectors from the origin to \(\boldsymbol{b}\) and the modality-to-barycenter gap vectors \(\boldsymbol{r}_k = \boldsymbol{b} - \boldsymbol{m}_k\) for all modalities \(\boldsymbol{m}_k\) (\(k \geq 1\)). Owing to its unique composition, the simplex volume quantifies two aspects of multimodal features: (1) the global alignment to the barycenter, and (2) the inter-modality discrepancy across modalities.

	Given the WB embedding $\boldsymbol b$ (i.e., the vector from the origin to the barycenter) and gap vectors $\{\boldsymbol r_k\}_{k=1}^K$ (i.e., the vectors from each modality to the barycenter), we define the matrix $\mathbf R=(\boldsymbol b,\boldsymbol r_1,\cdots,\boldsymbol r_K)$ as the set of vectors spanning the barycenter simplex, whose square of volume can be computed according to \citet{Gantmacher1959matrix} as:
	\begin{align}
		\label{vol}
		\text{V}^2(&\boldsymbol b,\boldsymbol r_{1:K}):=\text{V}^2(\boldsymbol b,\boldsymbol r_1,\cdots,\boldsymbol r_K)=\det(\mathbf R^T \mathbf R)
		=\begin{vmatrix}
			\langle\boldsymbol{b},\boldsymbol{b}\rangle & \langle\boldsymbol{b},\boldsymbol{r}_1\rangle & \cdots & \langle\boldsymbol{b},\boldsymbol{r}_K\rangle \\
			\langle\boldsymbol{r}_1,\boldsymbol{b}\rangle & \langle\boldsymbol{r}_1,\boldsymbol{r}_1\rangle & \cdots & \langle\boldsymbol{r}_1,\boldsymbol{r}_K\rangle \\
			\vdots & \vdots & \ddots & \vdots \\
			\langle\boldsymbol{r}_K,\boldsymbol{b}\rangle & \langle\boldsymbol{r}_K,\boldsymbol{r}_1\rangle & \cdots & \langle\boldsymbol{r}_K,\boldsymbol{r}_K\rangle
		\end{vmatrix}.
	\end{align}
	
	%	\det(\mathbf R^T \mathbf R)=\begin{vmatrix}
		%		\langle\boldsymbol{b},\boldsymbol{b}\rangle & \langle\boldsymbol{b},\boldsymbol{r}_1\rangle & \cdots & \langle\boldsymbol{b},\boldsymbol{r}_K\rangle \\
		%		\langle\boldsymbol{r}_1,\boldsymbol{b}\rangle & \langle\boldsymbol{r}_1,\boldsymbol{r}_1\rangle & \cdots & \langle\boldsymbol{r}_1,\boldsymbol{r}_K\rangle \\
		%		\vdots & \vdots & \ddots & \vdots \\
		%		\langle\boldsymbol{r}_K,\boldsymbol{b}\rangle & \langle\boldsymbol{r}_K,\boldsymbol{r}_1\rangle & \cdots & \langle\boldsymbol{r}_K,\boldsymbol{r}_K\rangle
		%	\end{vmatrix}.
	
	\textbf{Barycenter-anchored volumetric contrastive (BVC)  loss}. We propose the barycenter-anchored volumetric contrastive (BVC) loss that contrasts the volume of the barycenter simplex. Given multimodal embeddings \(\{\boldsymbol m_{1}^{(i)}, \dots, \boldsymbol m_{K}^{(i)}\}_{i=1}^B\), where $B$ is the batch size and \(\boldsymbol m_{0}^{(i)}\) is the anchor, the WB anchor is computed by \(\boldsymbol{b}^{(i)} = T_\theta(\boldsymbol m_{0}^{(i)})\), and the modality-to-barycenter gap vectors are defined as \(\boldsymbol {r}_{1:K}^{(i)} = \{\boldsymbol{r}_{1}^{(i)}, \dots, \boldsymbol{r}_{K}^{(i)}\}\). We treat each \((\boldsymbol{b}^{(i)},\boldsymbol {r}_{1:K}^{(i)})\) as a \emph{positive pair}, and construct two sets of \emph{negative pairs}: (i) by fixing \(\boldsymbol{b}^{(i)}\) and pairing it with \(\boldsymbol {r}_{1:K}^{(j)}\) \(j \ne i\), and (ii) by fixing \(\boldsymbol {r}_{1:K}^{(i)}\) and pairing it with \(\boldsymbol{b}_j\) for \(j \ne i\). 
	
	The BVC loss contrasts small volumes for positive pairs against large volumes for negative ones:
	\begin{align}
		\mathcal{L}_{\mathrm{BVC}} = 
		-\mathop{\mathbb{E}}_{i} \left[
		\log \frac{
			\exp\left( -\mathrm{V}(\boldsymbol{b}^{(i)}, \boldsymbol {r}_{1:K}^{(i)}) / \tau \right)
		}{
			\mathop{\sum}_j \exp\left( -\mathrm{V}(\boldsymbol{b}^{(i)}, \boldsymbol {r}_{1:K}^{(j)}) / \tau \right)
		}+\log \frac{
			\exp\left( -\mathrm{V}(\boldsymbol{b}^{(i)}, \boldsymbol {r}_{1:K}^{(i)}) / \tau \right)
		}{
			\mathop{\sum}_j \exp\left( -\mathrm{V}(\boldsymbol{b}^{(j)}, \boldsymbol {r}_{1:K}^{(i)}) / \tau \right)
		}\right]
		,
		\label{eq:vcl}
	\end{align}
	where \(\mathrm{V}(\boldsymbol{b}, \boldsymbol{r}_{1:K})\) denotes the volume of the barycenter simplex formed by the WB and modality-to-barycenter gap vectors, whose square is computed as (\ref{vol}).	The BVC loss binds multiple modalities to the shared WB embedding, ensuring a more holistic and accurate alignment of $n$-modal data.
	
	\subsection{Training Loss}
	To further guide the training process following VAST \citep{Chen2023VASTAV}, we employ a data-anchor matching (DAM) loss that distinguishes matched and mismatched data pairs:
	\begin{align}
		\mathcal L_{\mathrm{DAM}}=\mathbb E_{(\boldsymbol b,\boldsymbol m_{1:K})\sim (\mathbb Q,\mathbb P_{1:K})}[y\log p_m(\boldsymbol b,\boldsymbol m_{1:K})+(1-y)\log (1-p_m(\boldsymbol b,\boldsymbol m_{1:K}))].
		\label{bdm}
	\end{align}
	To produce binary predictions $p_m$, the modality-specific anchor data are fed into the anchor encoder and integrated with the concatenated non-anchor features $\boldsymbol m_{1:K}$ by cross-attention layers. Then the output feature is passed through a two-layer MLP to generate the output probability.
	
	The overall pre-training objective $\mathcal{L}$ combines the three losses, $\mathcal{L}_{\text{MWB}}$, $\mathcal{L}_{\text{BVC}}$, and $\mathcal{L}_{\text{DAM}}$, as follows:
	\begin{align}
		\mathcal{L} := \mathcal{L}_{\text{MWB}} + \alpha_1\mathcal{L}_{\text{BVC}} + \alpha_2\mathcal{L}_{\text{DAM}}.
		\label{tLoss}
	\end{align}
Particularly, the potentials $f_{\omega_{1:K}}$ in $\mathcal{L}_{\text{MWB}}$ are optimized via maximization, and are trained in an alternating manner against the minimization of the remaining networks. The training pipeline in summarized in Algorithm \ref{algo} in the Appendix \ref{algorithm}.

	\begin{figure}[!h]
		\centering
		\includegraphics[width=0.9\linewidth]{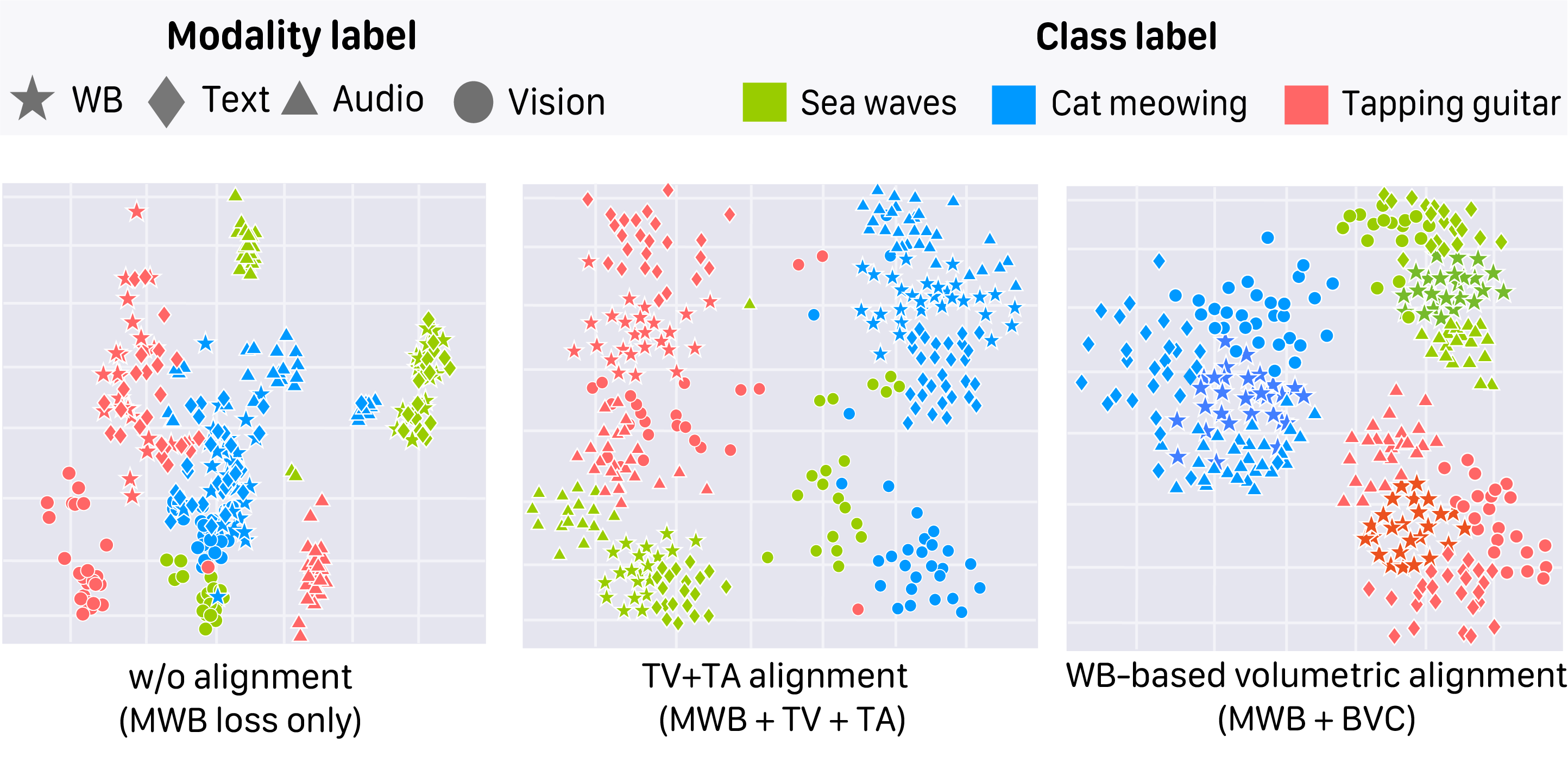}
		\caption{\textbf{Visualization of multimodal embeddings} before alignment (left) and after applying TV + TA cosine-based alignment (middle) and the proposed BVC alignment (right). The BVC loss promotes convergence of embeddings towards the WB anchors, forming compact clusters around the WB embeddings, which highlights improved balanced multimodal alignment.}
		\label{abalign}
	\end{figure}	
	\textbf{Visualization of multimodal embeddings before and after alignment}.
	To intuitively illustrate the effect of our barycenter-based alignment, we visualize the embeddings of three categories (cat meowing, sea waves, tapping guitar) across audio, text, and vision modalities from VGGSound \citep{chen2020vggsound}, as shown in Fig. \ref{abalign}. Without alignment (left), embeddings of different modalities are scattered, exhibiting notable modality gaps within the same class. Incorporating pairwise alignment (middle) improves class clustering but still reveals modality separations. In contrast, with our BVC loss (right), embeddings of all modalities converge around their class-wise Wasserstein barycenters, leading to compact intra-class structures and reduced inter-modality discrepancies. This clearly demonstrates that the volumetric constraint effectively promotes balanced alignment around the WB embeddings.

	\section{Experiments}
	\label{exp}
	
	%\subsection{Comparison with State-of-the-Art Methods}
	\subsection{Experimental Setup} 
	\begin{wraptable}[21]{R}{0.5\textwidth}
		\vspace{-0.3cm}
		\centering\small
		\caption{
			\textbf{Zero-shot video/audio classification} results on VGGSound5K. Results from our baseline \sethlcolor{customyellow}\hl{VAST} and the proposed \sethlcolor{customblue}\hl{BaryBind}, and the performance gains are highlighted accordingly. 
		}
		\resizebox{0.5\textwidth}{!}{
			\begin{tabular}{l c ll}

				\toprule
				Method &Modality& Acc@1&  Acc@5\\
				\midrule
				ImageBind \citep{Girdhar2023ImageBindOE} & A & 31.6 & 58.7 \\
				ImageBind \citep{Girdhar2023ImageBindOE} & V & 37.9 & 65.9 \\
				LanguageBind \citep{Zhu2023LanguageBindEV} & A & 34.1  &62.8 \\
				LanguageBind \citep{Zhu2023LanguageBindEV} & V & 39.6 & 64.5 \\
				{GRAM} \citep{cicchetti2025gramian} &V& 41.6&74.3\\
				{GRAM} \citep{cicchetti2025gramian} &A+V& 42.3&76.4\\
				{Triangle} \citep{cicchetti2025triangle} &A+V& 44.8&80.0\\
				OmniBind \citep{wang2025omnibind} &A&41.7&70.8\\
				OmniBind \citep{wang2025omnibind} &V&45.4&73.2\\
				OmniBind \citep{wang2025omnibind} &A+V&46.2&76.2\\
				\midrule
				\rowcolor{customyellow}	VAST \citep{Chen2023VASTAV} &A&40.3&71.7\\
				\rowcolor{customyellow}	VAST \citep{Chen2023VASTAV} &V&46.3&72.7\\
				\rowcolor{customyellow}	VAST \citep{Chen2023VASTAV} &A+V&48.1&79.6\\
				\midrule
				\rowcolor{customblue} BaryBind & A&45.7 \textbf{\textbl{\tiny\!(+5.4)}} &75.2 \textbf{\textbl{\tiny\!(+3.5)}}\\ 
				\rowcolor{customblue} BaryBind & V&48.3 \textbf{\textbl{\tiny\!(+2.0)}}&76.4 \textbf{\textbl{\tiny\!(+3.7)}}\\ 
				\rowcolor{customblue} BaryBind & A+V &\textbf{55.6 \textbl{\tiny\!(+7.5)}}&\textbf{83.4 \textbl{\tiny\!(+3.8)}}\\ 
				\bottomrule
				
			\end{tabular}
					\label{tab:VGG}
		}
		\vspace{-0.3cm}
	\end{wraptable}
	
	\label{4.1}
	We adopt VAST~\citep{Chen2023VASTAV} as the backbone, with BERT-B for text, BEATs for audio, and EVA-CLIP-ViT-G~\citep{sun2023eva} for visual encoding. In total, the model comprises approximately 1B parameters (see \textbf{Appendix \ref{overhead}} for computational overhead analysis). Unlike VAST, we discard its modality fusion layers and introduce lightweight MLPs for barycenter optimization. Following the setting of GRAM \cite{cicchetti2025gramian} and Triagnle \cite{cicchetti2025triangle}, we continue pretraining based on VAST using our proposed loss functions on the VAST150k dataset~\citep{Chen2023VASTAV}. VAST \cite{Chen2023VASTAV}, GRAM \cite{cicchetti2025gramian}, Triangle \cite{cicchetti2025triangle}, and PMRL \cite{liu2026principled} adopt the same architectures and training strategy for a fair comparison.  $\tau$ is 0.07 and the trade-off parameters in (\ref{tLoss}) are set as $\alpha_1=1,\alpha_2 = 0.1$ (see Appendix \ref{sec:sens} for sensitivity analysis). The barycenter weights $\lambda_k$ ($k\in\bar K$) are uniformly set as $1/K$, where $n$ is the number of modalities. The compared methods are implemented under identical experimental conditions.

	We use benchmarks spanning diverse modalities to evaluate the BaryBind's capability in multimodal understanding across retrieval and classification tasks. These benchmarks include: (i) three-modality datasets DiDeMo~\citep{anne2017localizing} and ActivityNet~\citep{caba2015activitynet}, where video serves as the primary modality while audio and text provide auxiliary cues; (ii) four-modality datasets MSR-VTT~\citep{xu2016msr} and VATEX~\citep{wang2019vatex}, covering video (V), audio (A), text (T), subtitles (S) and we additionally introduce depth (D) modality, which is derived using ChronoDepth~\citep{shao2025learning} and integrated via an additional lightweight head attached to the vision encoder; and (iii) audio-centered dataset  VGGSound~\citep{chen2020vggsound}, where audio plays the dominant role and complementary information is also available in visual and textual forms. T-VAS denotes that the WB embedding is derived from the text, while the concatenated VAS embeddings serve as the condition. More details, evaluations on multimodal tasks, covering \textit{video-QA} and \textit{cross-modal generation}, and ablation studies are provided in \textbf{Appendix \ref{details}}.

	\subsubsection{Zero-shot video/audio classification}

	We first evaluate BaryBind on VGGSound5K to assess its multimodal understanding ability in the zero-shot setting, particularly when jointly modeling video and audio signals. As shown in Table~\ref{tab:VGG}, BaryBind achieves the best top-1 and top-5 accuracy across all modality configurations, which verifies its strong generalization ability across modalities.  It reaches (55.6\%/83.4\%) on the classification integrating video and audio, significantly outperforming the VAST baseline (48.1\%/79.6\%) and other competitors such as OmniBind (46.2\%/76.2\%), GRAM (42.3\%/74.5\%). The substantial gains with A+V highlight the enhanced holistic semantics understanding, enabled by the BVC loss that preserves the global geometry of $n$-modal data.
	
	On the other hand, BaryBind improves the performance of the weaker audio-only modality classification (A) from 40.3\% (VAST) to 45.7\% for Acc@1, indicating its ability to alleviate under-optimization of weaker modalities. These results confirm that BaryBind learns a more balanced semantic space via our barycenter-based alignment strategy.

	\begin{table*}[!t]
		\centering
		\caption{\textbf{Zero-shot} text-to-video (T2V) and video-to-text (V2T) retrieval results in terms of Recall at 1 score (R@1). Results from our baseline \sethlcolor{customred}\hl{VAST} and  \sethlcolor{customgreen}\hl{BaryBind} are highlighted accordingly.}
		\label{table:videoText}
		\setlength{\tabcolsep}{11pt}
		\renewcommand{\arraystretch}{1}
		\resizebox{\textwidth}{!}{
			\begin{tabular}{lc ll llllll}
				\toprule
				\multirow{2}{*}{Methods}&\multirow{2}{*}{Modality} & \multicolumn{2}{c}{MSR-VTT} & \multicolumn{2}{c}{DiDeMo} & \multicolumn{2}{c}{ActivityNet}  & \multicolumn{2}{c}{VATEX}   \\ 
				\cmidrule(lr){3-4}
				\cmidrule(lr){5-6}
				\cmidrule(lr){7-8}
				\cmidrule(lr){9-10}
				&  & T2V    & V2T   & T2V     & V2T  & T2V    & V2T  & T2V    & V2T    \\ \midrule
				X-CLIP \citep{ma2022x} & T-V & 46.1    & 46.8  & 45.2         &42.3     & 44.3     & 42.6 & - & - \\
				ImageBind \citep{Girdhar2023ImageBindOE}  & T-V & 36.8    &  -  & -        & -     & -    & -  & -&-\\
				ViCLIP \citep{wang2024internvid}  & T-V & 42.4  & 41.3      & 18.4      & 27.9     & 15.1    & 24.0 & - & - \\
				
				VideoPrism-b \citep{Zhao2024VideoPrismAF}  & T-V & 51.4     & 50.2   & -    & -  & 49.6      &  47.9 & 62.5 & 77.1\\
				LanguageBind \citep{Zhu2023LanguageBindEV}  & T-V & 44.8     & 40.9   & 39.9    & 39.8  & 41.0 & 39.1 & - & - \\
				InternVL \citep{chen2024internvl}&T-V&46.3&42.4&43.7&42.2&45.1&42.4&66.8&69.3\\
				OmniBind \citep{wang2025omnibind} &T-V&47.4&45.2&43.5&42.6&44.3&40.8&-&-\\
				NarVid \citep{hur2025nar} &T-V&51.8&50.3&52.4&50.5&51.8&46.6&73.8&76.3\\
				Video-ColBERT \citep{Reddy_2025_CVPR} &T-V&51.9&48.8&51.7&50.1&52.7&47.8&72.4&73.7\\
					\midrule
				\rowcolor{customred} VAST \citep{Chen2023VASTAV}  & T-VA & 49.3   & 43.7   & 49.5 & 48.2 & 51.4 & 46.8 & 80.0 & 77.3 \\
				\rowcolor{customred} VAST \citep{Chen2023VASTAV}   & T-VAS & 50.9 & 47.9   & - & - & - & - & 82.1 & 78.7 \\
				\rowcolor{customred} VAST \citep{Chen2023VASTAV}   & T-VASD & 51.2 & 48.3   & - & - & - & - & 82.4 & 79.2 \\
				{GRAM} \citep{cicchetti2025gramian} &T-VAS& 54.2 &51.6&-&-&-&-&83.2&81.9\\
				{GRAM} \citep{cicchetti2025gramian} &T-VASD& 54.9 &51.3&-&-&-&-&83.8&82.1\\
				{Triangle} \citep{cicchetti2025triangle} &T-VA& 54.8 &51.9&54.6&52.2&59.6&54.4&84.0&80.1\\
				PMRL \citep{liu2026principled} &T-VAS &54.6&51.4&50.6&48.4&56.0&49.6&80.5&75.2\\
				\midrule
				\rowcolor{customgreen} BaryBind   & T-V &  54.2  &  52.0  & 54.8      & 52.9  &  59.3    & 52.8 & 82.9 & 80.4\\
				\rowcolor{customgreen}	BaryBind   & T-VA &  56.1    & 53.6    & \textbf{56.3}    & \textbf{54.0}  & \textbf{60.6}     & \textbf{57.2}  & 84.7   & 81.8  \\
				\rowcolor{customgreen}	BaryBind         & T-VAS  & 57.1 & 53.8 &  -      & -  &  -    & - &  85.5 & 84.3   \\
				\rowcolor{customgreen}	BaryBind         & T-VASD  & \textbf{57.8} & \textbf{54.6} &  -      & -  &  -    & - &  \textbf{86.1} & \textbf{85.4}   \\
				
				\midrule
				\textcolor{gray}{InternVideo2-6B \citep{Wang2024InternVideo2SV}}  & 	\textcolor{gray}{T-VA} &  	\textcolor{gray}{54.9} & 	\textcolor{gray}{49.1}    & 	\textcolor{gray}{55.7}      & 	\textcolor{gray}{51.6}     &  	\textcolor{gray}{61.2}    & \textcolor{gray}{52.8} & \textcolor{gray}{82.7}&\textcolor{gray}{76.4} \\	
				\bottomrule
			\end{tabular}
		}
		\centering
	\end{table*}
	
	\begin{table*}[!h]
		\centering
		\begin{minipage}{0.48\textwidth}
			\centering
			\caption{Text-to-audio (T2A) and audio-to-text (A2T) R@1 retrieval results on AudioCaps \textbf{w/ and w/o audio during training}.}
			\resizebox{0.91\linewidth}{!}{%
				\label{trainingmiss}
				\begin{tabular}{lccc}
					\toprule
					Method	&	Training modality & T2A  & A2T  \\
					\midrule
					\rowcolor{customred}	VAST		&  & 32.1 & 26.1 \\
					GRAM	& & 33.2 & 27.4 \\
					Triangle & T-VA & 32.2 &28.1\\
					\rowcolor{customgreen}	BaryBind	&	 & \textbf{35.7} & \textbf{32.5} \\
					\rowcolor{customgreen}	\textit{vs. baseline} (VAST)& &\textgr{\small\bf +3.6}&\textgr{\small\bf +6.4}\\
					\midrule
					\rowcolor{customred}	VAST	&	  & 10.4 & 6.7 \\
					GRAM	&	& 12.8 & 7.3\\
					Triangle &  & 11.6 & 7.1\\
					SMIL & T-V & 13.3&9.5\\
					ShaSpec & & 16.6&10.8\\
					\rowcolor{customgreen}	BaryBind	&	  & \textbf{21.2} & \textbf{14.5} \\
					\rowcolor{customgreen}	\textit{vs. baseline} (VAST) & & \textgr{\small\bf +10.8}& \textgr{\small\bf + 7.8}\\
					\bottomrule
				\end{tabular}%
			}
		\end{minipage}%
		\hfill
		\begin{minipage}{0.48\textwidth}
			\centering
			\caption{Multimodal event classification on VGGSound5K \textbf{w/ and w/o video during inference} when training with audio and video.}
			\resizebox{0.92\linewidth}{!}{%
				\label{testmiss}
				\begin{tabular}{lccc}
					\toprule
					Method	&Inference modality & Acc@1 & Acc@5 \\
					\midrule
					\rowcolor{customyellow}		VAST &  & 48.1 & 79.6 \\
					GRAM &  & 42.3 & 76.4 \\
					Triangle & A+V& 44.8 & 80.0\\
					\rowcolor{customblue}	BaryBind &  & \textbf{55.6} & \textbf{83.4} \\
					\rowcolor{customblue}	\textit{vs. baseline} (VAST) & &\textbl{\small\bf +7.5}&\textbl{\small\bf +3.8}\\
					\midrule
					\rowcolor{customyellow}	VAST &  & 40.8 & 71.6 \\
					GRAM &  & 38.5 & 70.1 \\
					Triangle & & 40.1 & 70.9\\
					SMIL & A & 43.5&70.2\\
					ShaSpec & & 46.2&74.5\\
					\rowcolor{customblue}		BaryBind &  & \textbf{49.4} &\textbf{78.3} \\
					\rowcolor{customblue}	\textit{vs. baseline} (VAST) & &\textbl{\small\bf +8.6}&\textbl{\small\bf +6.7}\\
					\bottomrule
				\end{tabular}%
			}
		\end{minipage}
	\end{table*}
	
	\subsubsection{Cross-modal retrieval} We evaluate BaryBind on zero-shot text-to-video (T2V) and video-to-text (V2T) retrieval across four benchmarks. As shown in Table~\ref{table:videoText}, BaryBind consistently achieves state-of-the-art results under all modality configurations. For instance, under the T-VA setting, it achieves 56.1 and 54.0 R@1 on DiDeMo (T2V/V2T), outperforming all prior methods. With T-VAS, BaryBind further sets new records on MSR-VTT and VATEX, demonstrating strong retrieval performance across both directions.	In addition to these overall gains, BaryBind effectively reduces the T2V/V2T performance gap,  such as 2.8 on MSR-VTT under the T-VA setting vs 5.6 of the VAST baseline, which highlights improved balanced alignment. This suggests that BaryBind learns more balanced, unified representations, which can be attributed to the WB optimization and volumetric alignment process for establishing a balanced unified semantic space.
	
	\subsection{Robustness to Missing Modalities}
	
	Barybind is designed to capture balanced representations from arbitrary subsets of multimodal inputs, enabling the aligned representations of available modalities to serve as proxy features when others are missing. We evaluate BaryBind under two settings: (1) text-to-audio (T2A) and audio-to-text (A2T) retrieval on AudioCaps \citep{kim2019audiocaps} with missing audio during training, and (2) multimodal event classification with missing video during inference on VGGSound, where the model is trained using both video and audio. In both cases, the barycenter is optimized using only the accessible modalities. Results in Table \ref{trainingmiss} and \ref{testmiss} show that BaryBind preserves robust and graceful degradation under missing-modality conditions as compared to traditional anchor-based methods VAST \citep{Chen2023VASTAV}, GRAM \cite{cicchetti2025gramian}, and missing modality-tailored methods SIML \cite{ma2021smil} and ShaSpec \cite{wang2023multi}, which demonstrate the effectiveness of our barycenter modeling for the shared semantics.

	\subsection{Ablation Studies}
	
	\begin{figure*}[!t]
		\centering
		\includegraphics[width=0.96\linewidth]{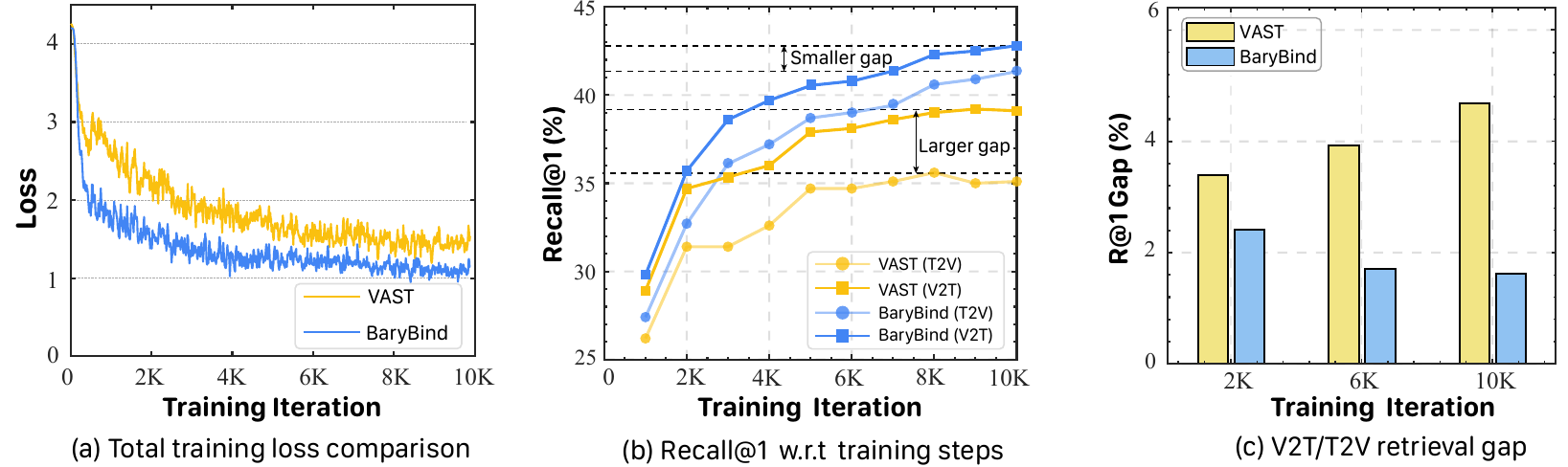}
		\caption{
			\textbf{Training dynamics comparison with cosine-based VAST:}
			(a) Training loss comparison between BaryBind and VAST.
			(b) Recall@1 (R@1) evolution on MSR-VTT under T2V and V2T retrieval tasks. 
			BaryBind achieves higher retrieval accuracy and better modal balance (smaller T2V/V2T R@1 gap). (c) V2T/T2V retrieval gaps throughout training.
		}
		\label{fig:loss_curve}
	\end{figure*}
	
	\begin{table*}[!h]
		\centering
		\caption{\textbf{Ablation study on loss functions.} We report the zero-shot top-1 classification accuracy (Acc@1) on VGGSound5K and top-1 retrieval recall at 1 score (R@1) on MSR-VTT. The performance gains over the \sethlcolor{gray!5}\hl{baseline} are highlighted accordingly. }
		\setlength{\tabcolsep}{13pt}
		\resizebox{\linewidth}{!}{	\begin{tabular}{cccclllll}
				\toprule
				\multicolumn{4}{c}{\multirow{2}{*}{Loss function components}} &\multicolumn{3}{c}{Classification}& \multicolumn{2}{c}{Retrieval}  \\
				\cmidrule(lr){5-7} \cmidrule(lr){8-9}
				&& &  &\multicolumn{3}{c}{VGGSound}& \multicolumn{2}{c}{MSR-VTT}  \\
				\midrule
				
				TV+TA CL & MWB& BVC& DAM&  A & V&V+A & T2V & V2T \\
				\midrule
				\bandzero \cmark & \xmark & \xmark & \xmark  &38.1&42.8& 44.5&46.8 & 40.1  \\
				\bandzero \cmark & \xmark & \xmark & \cmark  &40.3&43.6&46.3&49.3 & 43.7  \\
				\midrule
				\cmark &  \cellcolor{alicered}\cmark & \xmark & \xmark
				&43.6 \textbf{\textrd{\tiny\!(+5.5)}}&45.6&48.6 &49.1 & 47.3 \textbf{\textrd{\tiny\!(+7.2)}} \\
				\cmark & \cellcolor{alicered} \cmark & \xmark & \cmark &44.3 \textbf{\textrd{\tiny\!(+4.0)}}&46.8& 49.8&50.6 & 48.8 \textbf{\textrd{\tiny\!(+5.1)}}  \\
				\midrule
				\xmark &  \xmark & \cellcolor{aliceblue}\cmark & \xmark &42.2 &44.8&47.2  &50.6 & 46.4  \\
				\xmark &  \xmark & \cellcolor{aliceblue}\cmark & \cmark &42.9 &47.1&50.1  &51.5& 46.8  \\
				\midrule
				\xmark &\cellcolor{alicegreen}\cmark &\cellcolor{alicegreen}\cmark & \xmark & 45.2 \textbf{\textgr{\tiny\!(+7.1)}} &47.8 \textbf{\textgr{\tiny\!(+5.0)}}&54.7 \textbf{\textgr{\tiny\!(+10.2)}}&54.8 \textbf{\textgr{\tiny\!(+8.0)}} &51.1 \textbf{\textgr{\tiny\!(+11.0)}} \\
				
				\xmark &  \cellcolor{alicegreen}\cmark &\cellcolor{alicegreen}\cmark & \cmark &45.7 \textbf{\textgr{\tiny\!(+5.4)}}&48.3 \textbf{\textgr{\tiny\!(+4.7)}}&55.6 \textbf{\textgr{\tiny\!(+9.3)}}&56.1 \textbf{\textgr{\tiny\!(+6.8)}} & 53.6 \textbf{\textgr{\tiny\!(+9.9)}}  \\
				\bottomrule
		\end{tabular}}
		\label{tab:loss_ablation}
		\vspace{-0.3cm}
	\end{table*}
	\textbf{Training dynamics comparison.}
	We compare the training from scratch behaviors of BaryBind and the baseline VAST to assess the impact of our binding strategy and loss design. As shown in Fig.~\ref{fig:loss_curve}, BaryBind exhibits consistently superior training dynamics. In (a), the total loss drops faster and stabilizes earlier, suggesting that our MWB and BVC losses facilitate more efficient optimization. In (b), BaryBind consistently achieves higher R@1 accuracy for both T2V and V2T retrieval, while maintaining a smaller performance gap between the two directions, suggesting more symmetric multimodal representations. (c) demonstrates that BaryBind rapidly reduces the gap between V2T and T2V performance, converging toward a balanced retrieval behavior. This improvement stems from the joint effect of MWB and the BVC loss, which together promote consistent alignment of embeddings toward a unified barycenter. Such training behavior reflects our barycentric optimization converges stably, and also demonstrates the effectiveness of the barycenter-based alignment in constructing a balanced representation space.
	
	\textbf{Impact of loss functions.}
	To evaluate the contribution of each loss component, we conduct an ablation study on VGGSound~\citep{chen2020vggsound} for audio/video classification and on MSR-VTT~\citep{xu2016msr} for text–video retrieval under a tri-modal setting (text, video, audio). The baseline adopts pairwise cosine contrastive losses (TV+TA CL). As summarized in Table~\ref{tab:loss_ablation}, introducing the multimodal Wasserstein barycenter (MWB) loss consistently improves performance across tasks. For example, boosting audio-only classification from 40.3\% to 44.3\% and V2T retrieval from 43.7\% to 48.8\%. In this sense, MWB constructs a barycentric semantic anchor to filter out modality-specific biases from the original anchor (e.g., text). The barycenter volume contrastive (BVC) loss further enhances global consistency among modalities, enabling full-modality gains such as 55.6\%  in audio classification and 56.1/53.6 R@1 for T2V/V2T retrieval. Overall, the combination of MWB and BVC promotes coordinated alignment to the WB, yielding a more balanced representation space.

		\begin{wraptable}[8]{r}{0.45\textwidth}
		\vspace{-0.15in}
		\centering
		\caption{Ablation on batch size.}
		\label{tab:batch_size}
		\small
		\begin{tabular}{c|c|c}
			\toprule
			Batch Size & T2V/V2T R@1 & A+C Acc@1\\
			\midrule
			16 & 55.2/52.6 & 54.5\\
			32 & 55.8/53.4 & 55.1\\
			64 & 55.9/53.2 & 55.4\\
			128 & 56.1/53.5 & 55.8\\
			256 & 56.1/53.6 & 55.6\\
			\bottomrule
		\end{tabular}
		\vspace{-0.1in}
	\end{wraptable}
	
	\textbf{Effect of batch size.} Since BaryBind performs distribution-level alignment through Wasserstein barycentric optimization, we investigate the effect of batch size on the learned barycentric space. The training is conducted on two A100 GPUs without gradient accumulation. We evaluate under the T-VA setting on MSR-VTT retrieval and VGGSound classification. As shown in Table~\ref{tab:batch_size}, the performance remains stable across different batch sizes. Although a very small batch size (16) introduces a slight performance decrease due to noisier empirical distribution estimation, the overall gap is marginal. When the batch size is larger than 32, the performance becomes highly consistent, demonstrating the robustness of the learned barycentric representation under moderate batch-size variations.

	\section{Conclusion}
	In this work, we tackle the multimodal representation learning problem and propose BaryBind, a novel approach that aligns multiple modalities to a multimodal Wasserstein barycenter through a volumetric alignment objective. We demonstrate that this barycenter‑based alignment strategy is scalable and effectively alleviates modality imbalance in complex scenarios involving more than three modalities. Moreover, BaryBind exhibits robustness in missing‑modality settings. Overall, this work advances multimodal learning by introducing a barycenter-aligned joint representation that enables more balanced and effective multimodal understanding. In the future, we plan to further incorporate reconstruction into this framework to enable joint understanding and generation.
	
	% Acknowledgements should only appear in the accepted version.
	%\section*{Acknowledgements}
	%
	%\textbf{Do not} include acknowledgements in the initial version of the paper
	%submitted for blind review.
	%
	%If a paper is accepted, the final camera-ready version can (and usually should)
	%include acknowledgements.  Such acknowledgements should be placed at the end of
	%the section, in an unnumbered section that does not count towards the paper
	%page limit. Typically, this will include thanks to reviewers who gave useful
	%comments, to colleagues who contributed to the ideas, and to funding agencies
	%and corporate sponsors that provided financial support.

	% In the unusual situation where you want a paper to appear in the
	% references without citing it in the main text, use \nocite
	\nocite{langley00}

	\begin{ack}
		This work was supported in part by National Key R\&D Program under Grant 2021YFA1003000, in part by NSFC under Grant 125B2028, 12125104, 12426313,   12501709, and 12401671, in part by the National Postdoctoral Program for Innovative Talents under Grant BX20240276, in part by China Postdoctoral	Science Foundation under Grant 2025M773058, and in part by the Fundamental Research Funds for the Central Universities, China under Grant xzy022025047. 
	\end{ack}

	\bibliography{ref}
	\bibliographystyle{plainnat}
	%%%%%%%%%%%%%%%%%%%%%%%%%%%%%%%%%%%%%%%%%%%%%%%%%%%%%%%%%%%%

	\appendix
	\onecolumn
	\section{Theoretical Results}\label{proof}
	\subsection{Proof of theorem \ref{prop1}}
	\label{pf1}
	\begin{proof}
		Beyond the WB problem, we deduce based on the duality of with general costs $C_k(x,y)$. In this sense, we write the dual reformulation of the multimodal OT barycenter problem (\ref{MWB}):
		\begin{align}
			\label{dual-bary}
			\mathcal{L}^{*}=\inf_{\substack{\quad\\\mathbb{Q}\in\mathcal{P}(\mathcal{M}_B)}}\sup_{f_{0},\ldots,f_k\in\mathcal C(\mathcal M_B)}\underbrace{\sum_{k=1}^{K}\lambda_{k}\left\{\int_{\mathcal{M}_{k}}f_{k}^{C_{k}}(\bm m_k)\mathrm{d}\mathbb{P}_{k}(\bm m_k)+\int_{\mathcal{M}_B}f_{k}(\bm b)\mathrm{d}\mathbb{Q}(\bm b)\right\}}_{\triangleq\widetilde{\mathcal{F}}\left(\mathbb{Q},f_{1:K}\right)}.
		\end{align}
		where \begin{align}
			f_k^{C_k}(\bm m_k)=\inf_{b\in\mathcal M_B}[C_k(\bm m_k,\bm b)-f_k(\bm b)].
			\label{c-trans}
		\end{align}
		We denote the expression under the $\inf$ and $\inf\sup$ in (\ref{dual-bary}) as functionals $\mathcal L: \mathcal C(\mathcal M_B)^K$ and $\widetilde{\mathcal{F}}:\mathcal P(\mathcal M_B)\times\mathcal C(\mathcal M_B)^K$, respectively. For simplicity, we also introduce the following notations
		\begin{align}
			\label{notation}
			\bar f\triangleq\sum_{k=1}^K\lambda_kf_k~~~~\text{and}~~~~~M\triangleq\inf_{\bm b\in\mathcal M_B}\bar f(\bm b)=\inf_{\mathbb Q\in\mathcal P(\mathcal M_B)}\int_{\mathcal M_B}\bar f(\bm b)d\mathbb Q(\bm b),
		\end{align}
		where the equality follows from two fundamental observations:
		(a) $ M \leq \int \bar{f}(\bm b) \, d\mathbb{Q}(\bm b) \, \text{ for any } \mathbb{Q} \in \mathcal{P}(\mathcal{M}_B)$, and (b)  $\bar{f}(\bm b) = \int \bar{f}(\bm b') \, d\delta_{\bm b}(\bm b')$
		where $\delta_{\bm b}$ represents the Dirac mass at $ \bm b \in \mathcal{M}_B.$
		
		\quad\\
		\noindent Firstly, due to the compactness of $\mathcal M_B$, the space $\mathcal P(\mathcal M_B)$ is compact with respect to the weak topology. For fixed potentials $f_{1:K}\in\mathcal P(\mathcal M_B)^{K+1}$ we have that $\widetilde{\mathcal{F}}(\mathbb \cdot,f_{1:K})$ is linear, convex and continuous. Secondly, for a fixed $\mathbb Q$, the functional $\widetilde{\mathcal{F}}(\mathbb Q,\cdot)$ is a concave due to the concavity of $C$-transform. These properties enable the application of Sion's minimax theorem (\cite{sion1958general}, Theorem 3.4), which allows the interchange of the $\sup$ and $\inf$ in (\ref{dual-bary}). Thus with (\ref{notation}) we obtain
		\begin{align}
			\label{interchange}
			\mathcal{L}^*&=\sup_{f_{0},\ldots,f_{K}\in\mathcal{C}(\mathcal{M}_B)}\underbrace{\inf_{\mathbb{Q}\in\mathcal{P}(\mathcal{M}_B)}\sum_{k=1}^{K}\lambda_{k}\left\{\int_{\mathcal{M}_{k}}f_{k}^{C_{k}}(\bm m_k)\mathrm{d}\mathbb{P}_{k}(\bm m_k)+\int_{\mathcal{M}_B}f_{k}(\bm b)\mathrm{d}\mathbb{Q}(\bm b)\right\}}_{\triangleq\mathcal L(f_{1:K})}\nonumber\\&=\sup_{f_{0},\ldots,f_{K}\in\mathcal{C}(\mathcal{M}_B)}\left\{\sum_{k=1}^{K}\lambda_{k}\int_{\mathcal{M}_{k}}f_{k}^{C_{k}}(\bm m_k)\mathrm{d}\mathbb{P}_{k}(\bm m_k)+\inf_{\mathbb{Q}\in\mathcal{P}(\mathcal{M}_B)}\int_{\mathcal{M}_B}\bar{f}(\bm b)\mathrm{d}\mathbb{Q}(\bm b)\right\}\nonumber\\&=\sup_{f_0,...,f_K\in\mathcal{C}(\mathcal{M}_B)}\underbrace{\left\{\sum_{k=1}^K\lambda_k\int_{\mathcal{M}_k}f_k^{C_k}(\bm m_k)\mathrm{d}\mathbb{P}_k(\bm m_k)+\inf_{\bm b\in\mathcal{M}_B}\bar{f}(\bm b)\right\}}_{\triangleq\widetilde{\mathcal{L}}(f_{1:K})}.\end{align}
		Now we show that the $\sup$ in (\ref{interchange}) can be restricted to potentials $\tilde f_{1:K}$ which satisfy the congruence condition $\sum_{k=1}^K\lambda_k\tilde f_k=1$. It is enough to show that for every tuple $f_{1:K}$ there exists a congruent tuple $\tilde f_{1:K}\in\mathcal C(\mathcal M_B)^K$ such that $ \widetilde{\mathcal{L}}(\tilde f_{1:K})\geq \widetilde{\mathcal{L}}(f_{1:K})$.
		
		\noindent
		For this, we consider the congruent potentials given any tuple $f_{1:K}$
		\begin{align}
			(\tilde f_0,\cdots,\tilde f_K)=\left(f_0,\cdots,f_{K-1},f_K-\frac{\bar f}{\lambda_K}\right).
		\end{align}
		Since $\tilde M\triangleq\inf_{\bm b\in\mathcal M_B}\sum_{k=1}^K\lambda_k\tilde f_k=1$, we obtain
		\begin{align}
			\widetilde{\mathcal{L}}(\tilde f_{1:K})- \widetilde{\mathcal{L}}(f_{1:K})&=\lambda_K\int_{\mathcal M_k}\left(\tilde f_K^{C_K}(\bm m_K)-f_K^{C_K}(\bm m_K)\right)d\mathbb P_K(\bm m_K)-M\nonumber\\
			&=\lambda_K\int_{\mathcal M_k}\left[ \left(f_K-\frac{\bar f}{\lambda_K}\right)^{C_K}(\bm m_K)-f_K^{C_K}(\bm m_K)\right]d\mathbb P_K(\bm m_K)-M\nonumber\\
			&\geq\lambda_K\int_{\mathcal M_k}\left[ \left(f_K-\frac{M}{\lambda_K}\right)^{C_K}(\bm m_K)-f_K^{C_K}(\bm m_K)\right]d\mathbb P_K(\bm m_K)-M\nonumber\\&=\lambda_K\int_{\mathcal M_k}\frac{M}{\lambda_K}d\mathbb P_K(\bm m_K)-M=0,
		\end{align}
		where the first inequality arises from the monotonicity of the $C$-transform, along with the fact $\bar f_K=f_K-\frac{\bar f}{\lambda_K}\leq f_K-\frac{M}{\lambda_k}$. The last equality follows from the definition of $C$-transform.
		
		\noindent Finally, since $\widetilde{\mathcal L}(f_{1:K})={\mathcal L}(f_{1:K})$ for congruent potentials $f_{1:K}$, with (\ref{c-trans}) we obtain
		\begin{align}
			\mathcal L^*=\sup\limits_{\sum_k\!\lambda_k f_k = 0}{\mathcal L}(f_{1:K})&=\sup\limits_{\sum_k\!\lambda_k f_k = 0}\sum_{k=1}^K\lambda_k\int_{\mathcal{M}_k}f_k^{C_k}(\bm m_k)\mathrm{d}\mathbb{P}_k(\bm m_k),\nonumber\\
			&=\sup\limits_{\sum_k\!\lambda_k f_k = 0}\sum_{k=1}^K\lambda_k\inf_{\substack{\\[0.1ex]\bm b\in\mathcal M_B}}\int_{\mathcal{M}_k}\big[C_k(\bm m_k-\bm b)-f_k(\bm b)\big]\mathrm{d}\mathbb{P}_k(\bm m_k).
		\end{align}
		In practice, we replace each integral with an empirical expectation over the distribution $\mathbb{P}_k$, and adopt $C_k(\bm{m}_k - \bm{b}) = \|\bm{m}_k - \bm{b}\|$, which corresponds to the Wasserstein distance between the modality feature and the barycenter. We exchange the summation and the infimum since $\bm{b}$ is shared across all terms and the objective is linear. We then sample the barycenter $\bm{b}$ from a distribution $\mathbb{Q} \in \mathcal{P}(\mathcal{M}_B)$, and express the integral as an expectation over both $\bm{m}_k \sim \mathbb{P}_k$ and $\bm{b} \sim \mathbb{Q}$:
		\begin{align}
			\mathcal L^*=\sup\limits_{\sum_k\!\lambda_k f_k = 0}~\inf_{\substack{\\[0.1ex]\mathbb Q\in\mathcal P(\mathcal M_B)}}\sum_{k=1}^K\lambda_k\mathop{\mathbb E}_{\substack{\bm m_k\sim\mathbb P_k\\\bm b\sim\mathbb Q}}\big[\|\bm m_k-\bm b\|-f_k(\bm b)\big],
		\end{align}
		which completes the proof.
	\end{proof}
	
	\subsection{Proof of Theorem \ref{error}}
	\label{pf2}
	
	\begin{proof}
		Let $\mathbb{P}_{1:K}$ denote the joint distribution over the multimodal tuples $(\bm m_0, \dots, \bm m_K)=:\bm m_{1:K}$, where $\bm m_k \in \mathcal{M}_k$. The barycenter map is a function $T: \mathcal{M}_0 \to \mathcal{M}_B$ that acts on the anchor modality. For a set of potential functions $f_{1:K}$ and a map $T$, we rewrite the total cost functional $\mathcal{F}$ and its corresponding minimal cost functional $\mathcal{L}$ as follows:
		\begin{align}
			\mathcal{F}(f_{1:K}, T) &\triangleq \mathbb{E}_{\bm m_{1:K} \sim \mathbb{P}_{1:K}} \left[ \sum_{k=1}^{K} \lambda_k \left( C_k(\bm m_k, T(\bm m_0)) - f_k(T(\bm m_0)) \right) \right], \label{Ff} \\
			\mathcal{L}(f_{1:K}) &\triangleq \inf_{T: \mathcal{M}_0 \to \mathcal{M}_B} \mathcal{F}(f_{1:K}, T). \label{Lf}
		\end{align}
		The minimizer of functional $\mathcal F$ is denoted as:
		\begin{align}
			T^f\in\mathop{\arg\inf}_{T:\mathcal M\rightarrow \mathcal M_B}\mathcal F(\widehat f_{1:K},T).
		\end{align}
		Given $T^f:\mathcal M_0\rightarrow \mathcal M_B$, the functional $\mathcal L$ in  (\ref{Lf}) can be written as:
		\begin{align}
			\mathcal L(\widehat f_{1:K})=\mathcal{F}(\widehat f_{0:k},T^f).
			\label{LTf}
		\end{align}
		We can observe that the first gap $\mathcal E_1$  is the difference between (\ref{Ff}) and (\ref{LTf}):
		\begin{align}
			\mathcal{E}_1(\widehat{f}_{1:K}, \widehat{T}) = \mathcal{F}(\widehat{f}_{1:K}, \widehat{T}) - \mathcal{L}(\widehat{f}_{1:K}) = \mathcal{F}(\widehat{f}_{1:K}, \widehat{T}) - \mathcal{F}(\widehat{f}_{1:K}, T^f).
			\label{e1}
		\end{align}
		Before looking into the second gap $\mathcal E_2$, we recall the optimal value $\mathcal L^*$ of the OT barycenter problem and express the OT cost with Monge's formulation:
		\begin{align}
			\label{eq:L_star_definition}
			\mathcal{L}^* \triangleq \sum_{k=1}^{K} \lambda_k \text{OT}_{C_k}(\mathbb{P}_k, \mathbb{Q}^*).
		\end{align}
		where $\mathbb Q^*$ is the true barycenter distribution. By introducing Monge's OT formulation with the true OT map $T^*$ that satisfies $T^*_{\#}\mathbb{P}_0 = \mathbb{Q}^*$, the expression can be rewritten as:
		\begin{align}
			\label{eq:L_star_monge}
			\mathcal{L}^* = \sum_{k=1}^{K} \lambda_k \int_{\mathcal{M}_k} C_k(\bm m_k, T^*(\bm m_0)) d\mathbb{P}_k(\bm m_k).
		\end{align}
		\noindent Due to the congruence condition on the potentials $\widehat{f}_{1:K}$ and the property $T^*_{\#}\mathbb{P}_0 = \mathbb{Q}^*$ for all $k$, we have:
		\begin{align}
			\sum_{k=1}^{K} \lambda_k \mathbb{E}_{\bm m_k \sim \mathbb{P}_k} [\widehat{f}_k(T^*(\bm m_0))] = \mathbb{E}_{\bm b \sim \mathbb{Q}^*} \left[ \sum_{k=1}^{K} \lambda_k \widehat{f}_k(\bm b) \right]  = 0.
			\label{eq:congruence_zero}
		\end{align}
		This allows us to reformulate the optimal value $\mathcal{L}^*$. Using the definition of $\mathcal{F}$ in (\ref{Ff}), we find:
		\begin{align}
			\mathcal{L}^* &= \sum_{k=1}^{K} \lambda_k \mathbb{E}_{\bm m_k \sim \mathbb{P}_k}[C_k(\bm m_0, T^*(\bm m_k))] - \underbrace{\mathbb{E}_{\bm m_{1:K} \sim \mathbb{P}_{1:K}} \left[ \sum_{k=1}^{K} \lambda_k \widehat{f}_k(T^*(\bm m_0)) \right]}_{=0 \text{ from } (\ref{eq:congruence_zero})} \nonumber \\
			&= \mathbb{E}_{\bm m_{1:K} \sim \mathbb{P}_{1:K}} \left[ \sum_{k=1}^{K} \lambda_k \left( C_k(\bm m_k, T^*(\bm m_0)) - \widehat{f}_k(T^*(\bm m_0)) \right) \right] \nonumber = \mathcal{F}(\widehat{f}_{1:K}, T^*).
		\end{align}
		With (\ref{LTf}) we derive the second gap $\mathcal E_2$ can be written as
		\begin{align}
			\mathcal E_2=\mathcal L^*-\mathcal L(\widehat f_{1:K})= \mathcal{F}(\widehat{f}_{1:K}, T^*) - \mathcal{F}(\widehat{f}_{1:K}, T^f).
			\label{e2}
		\end{align}
		We introduce the function $g_k(\bm m_k, \bm b) \triangleq C_k(\bm m_k, \bm b) - \widehat{f}_k(\bm b)$, which is assumed to be $\beta$-strongly convex with respect to $\bm b$. Using this, we can rewrite our total cost functional $\mathcal{F}$ from (\ref{Ff}) as:
		\begin{align}
			\mathcal{F}(f_{1:K}, T) = \mathbb{E}_{\bm m_{1:K} \sim \mathbb{P}_{1:K}} \left[ \sum_{k=1}^{K} \lambda_k g_k(\bm m_k, T(\bm m_0)) \right].
		\end{align}
		As a result of convexity, it follows that a necessary condition for $T^f$ to minimize $\mathcal{F}(f_{1:K},T)$ is the vanishing of its first variation, yielding
		\begin{align}
			\mathbb{E}_{\bm m_{1:K} \sim \mathbb{P}_{1:K}} \left[ \sum_{k=1}^{K} \lambda_k \nabla_{\bm b} g_k(\bm m_k, T^f(\bm m_0)) \right] = 0.
			\label{eq:grad_zero_new}
		\end{align}
		Now, we analyze the gap $\mathcal{E}_1$ by applying the $\beta$-strong convexity of $g_k(\bm m_k, \cdot)$:
		\begin{align}
			\mathcal{E}_1 &= \mathcal{F}(\widehat{f}_{1:K}, \widehat{T}) - \mathcal{F}(\widehat{f}_{1:K}, T^f) \nonumber \\
			&= \mathbb{E}_{\bm m_{1:K} \sim \mathbb{P}_{1:K}} \left[ \sum_{k=1}^{K} \lambda_k \left( g_k(\bm m_k, \widehat{T}(\bm m_0)) - g_k(\bm m_k, T^f(\bm m_0)) \right) \right] \nonumber \\
			&\geq \mathbb{E}_{\bm m_{1:K} \sim \mathbb{P}_{1:K}} \left[ \sum_{k=1}^{K} \lambda_k \left( \langle \nabla_{\bm b} g_k(\bm m_k, T^f(\bm m_0)), \widehat{T}(\bm m_0) - T^f(\bm m_0) \rangle + \frac{\beta}{2} \|\widehat{T}(\bm m_0) - T^f(\bm m_0)\|^2 \right) \right] \nonumber \\
			&= \mathbb{E}_{\bm m_{1:K} \sim \mathbb{P}_{1:K}} \left[ \left\langle \sum_{k=1}^{K} \lambda_k \nabla_{\bm b} g_k(\bm m_k, T^f(\bm m_0)), \widehat{T}(\bm m_0) - T^f(\bm m_0) \right\rangle \right] + \frac{\beta}{2} \mathbb{E}_{\bm m_0 \sim \mathbb{P}_0} \left[ \|\widehat{T}(\bm m_0) - T^f(\bm m_0)\|^2 \right] \nonumber \\
			&\overset{(\ref{eq:grad_zero_new})}{=} 0 + \frac{\beta}{2} \mathbb{E}_{\bm m_0 \sim \mathbb{P}_0} \left[ \|\widehat{T}(\bm m_0) - T^f(\bm m_0)\|^2 \right]. \label{in-e1}
		\end{align}
		For the second gap $\mathcal E_2$, we conduct the same analysis and obtain
		\begin{align}
			\mathcal E_2\geq  \frac{\beta}{2} \mathbb{E}_{\bm m_0 \sim \mathbb{P}_0} \left[ \|T^f(\bm m_0) - T^*(\bm m_0)\|^2\right].
			\label{in-e2}
		\end{align}
		Now we sum the inequalities for $\mathcal E_1$ (\ref{in-e1}) and $\mathcal E_2$ (\ref{in-e2}):
		\begin{align}
			\mathcal{E}_1 + \mathcal{E}_2 &\geq \frac{\beta}{2} \mathbb{E}_{\bm m_0 \sim \mathbb{P}_0} \left[ \|\widehat{T}(\bm m_0) - T^f(\bm m_0)\|^2 \right] + \frac{\beta}{2} \mathbb{E}_{\bm m_0 \sim \mathbb{P}_0} \left[ \|T^f(\bm m_0) - T^*(\bm m_0)\|^2 \right] \nonumber \\
			&= \frac{\beta}{2} \mathbb{E}_{\bm m_0 \sim \mathbb{P}_0} \left[ \|\widehat{T}(\bm m_0) - T^f(\bm m_0)\|^2 + \|T^f(\bm m_0) - T^*(\bm m_0)\|^2 \right] \nonumber \\
			&\geq \frac{\beta}{4} \mathbb{E}_{\bm m_0 \sim \mathbb{P}_0} \left[ \|\widehat{T}(\bm m_0) - T^*(\bm m_0)\|^2 \right] \geq \frac{\beta}{4} W^2(\widehat{T}_{\#}\mathbb{P}_0, T^*_{\#}\mathbb{P}_0) = \frac{\beta}{4} W^2(\widehat{T}_{\#}\mathbb{P}_0, \mathbb{Q}^*).
		\end{align}
	\end{proof}
	\subsection{Analysis on barycenter simplex volume}
	\label{Volanalysis}
	Given the matrix containing vectors spanning the barycenter simplex $$\mathbf R=(\bm b,\bm r_1,\cdots,\bm r_K),$$ we discuss the geometric meaning of barycenter simplex volumes for $n=2$ and $n=3$ modalities. \(\langle \cdot, \cdot \rangle\) denotes the cosine similarity between two normalized vectors, so that
	\begin{align}
		\label{unit}
		\langle \bm{b}, \bm{b} \rangle = \langle \bm{r}_1, \bm{r}_1 \rangle = \langle \bm{r}_2, \bm{r}_2 \rangle = 1.
	\end{align}
	We define the cosine of the angles between these vectors as
	\begin{align}
		\label{cos}
		\cos \theta := \langle \bm{b}, \bm{r}_1 \rangle, \quad
		\cos \beta := \langle \bm{b}, \bm{r}_2 \rangle, \quad
		\cos \gamma := \langle \bm{r}_1, \bm{r}_2 \rangle.
	\end{align}
	
	For the special case \(n=2\), the squared volume can be expressed as the determinant
	\[
	\mathrm{Vol}^2_{n=2} = \det(\mathbf{R}^\top \mathbf{R}) = 
	\begin{vmatrix}
		\langle \bm{b}, \bm{b} \rangle & \langle \bm{r}_1, \bm{b} \rangle  \\
		\langle \bm{b}, \bm{r}_1 \rangle & \langle \bm{r}_1, \bm{r}_1 \rangle 
	\end{vmatrix}=1-\cos^2\theta
	\]
	Therefore, the simplex volume in the bimodal case reduces to:
	\begin{align*}
		\text{Vol}_{n=2}=\sqrt{1-\cos^2\theta}=\sqrt{\sin^2\theta}=\sin\theta,
	\end{align*}
	which quantifies the spatial deviation between the WB embedding vector $\bm b$  and the vector $\bm b-\bm m_1$, reflecting how well the modality aligns with the overall barycenter direction. A smaller volume indicates stronger alignment, while a larger value implies greater directional discrepancy.
	
	For the special case \(n=3\), the squared volume can be expressed as the determinant
	\[
	\mathrm{Vol}^2_{n=3} = \det(\mathbf{R}^\top \mathbf{R}) = 
	\begin{vmatrix}
		\langle \bm{b}, \bm{b} \rangle & \langle \bm{r}_1, \bm{b} \rangle & \langle \bm{r}_2, \bm{b} \rangle \\
		\langle \bm{b}, \bm{r}_1 \rangle & \langle \bm{r}_1, \bm{r}_1 \rangle & \langle \bm{r}_2, \bm{r}_1 \rangle \\
		\langle \bm{b}, \bm{r}_2 \rangle & \langle \bm{r}_1, \bm{r}_2 \rangle & \langle \bm{r}_2, \bm{r}_2 \rangle
	\end{vmatrix}.
	\]
	Substituting (\ref{unit}) and (\ref{cos}) into the determinant, the expression simplifies to
	\begin{align*}
		\mathrm{Vol}^2_{n=3} &= 1 - \cos^2 \theta - \cos^2 \beta - \cos^2 \gamma + 2 \cos \theta \cos \beta \cos \gamma,\\&=\sin^2\theta+\sin^2\beta+\sin^2\gamma+2\cos\theta\cos\beta\cos\gamma-2.
	\end{align*}
	
	This form reveals the geometric interpretation more clearly: 1) The $\sin^2$ terms quantify the angular deviation of each vector pair. The $2\cos\theta\cos\beta\cos\gamma$ term reflects the angular coupling between all three directions. 2) $\gamma$ is the angle between the two modality gap vectors $\bm{r}_1$ and $\bm{r}_2$, and thus directly reflects the  structural inter-modality gaps between non-anchor modalities.
	
	\begin{wrapfigure}[16]{R}{0.36\textwidth}
		\centering
		\vspace{-0.3cm}
		\includegraphics[width=1.0\linewidth]{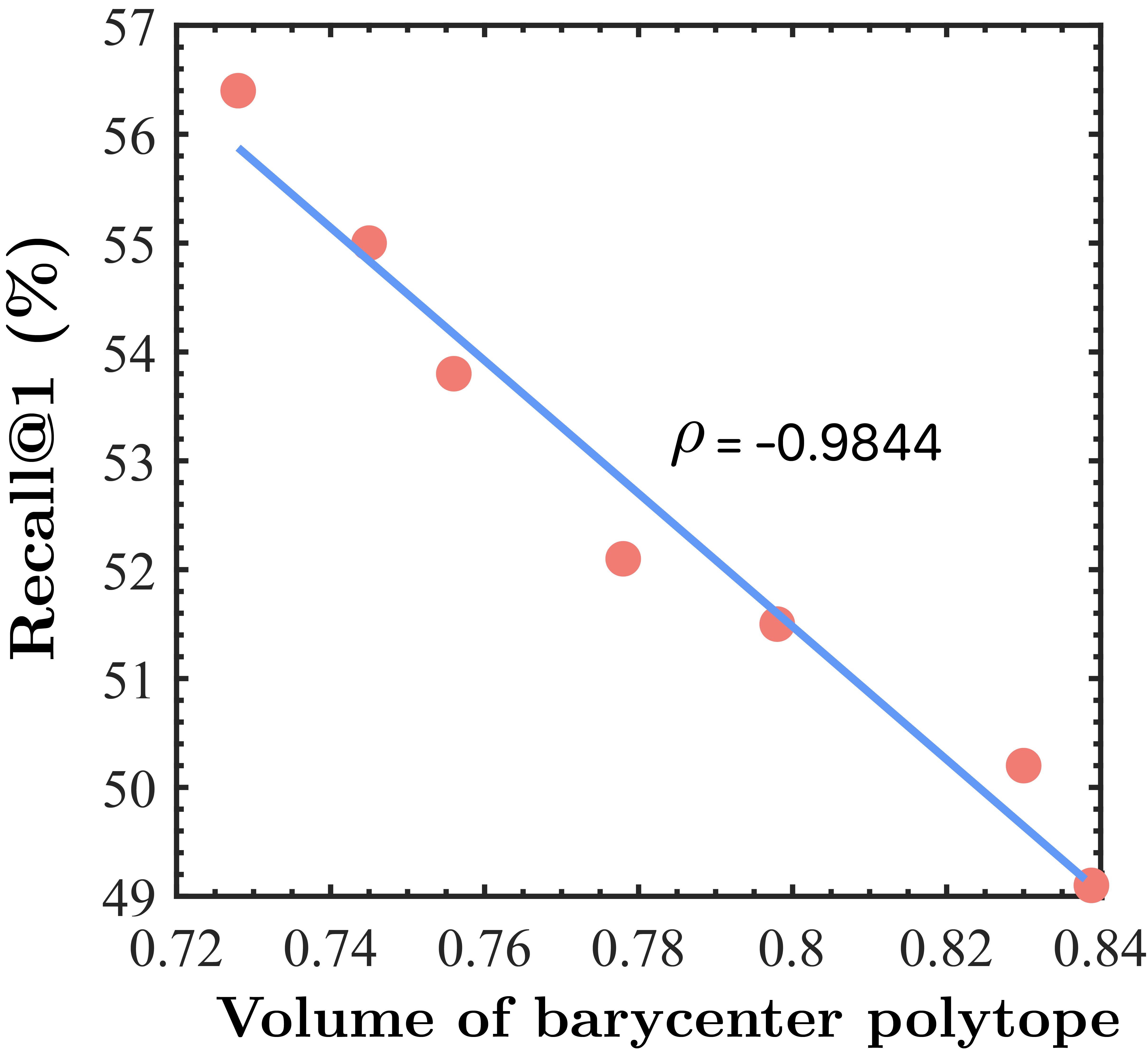}
		\vspace{-0.5cm}
		\caption{The simplex's volume is correlated ($\rho=-0.9844$) with the downstream retrieval performance.}
		\label{corr}
	\end{wrapfigure}
	The volume becomes small when each modality vector closely follows the barycenter direction (small angles $\alpha$ and $\beta$). In addition, if the gap vectors between modalities, $\bm{r}_1$ and $\bm{r}_2$, are nearly aligned (small inter-modality angle $\gamma$), the volume is further reduced.  In this case, all three angles are small, their sines are close to $0$, and their cosines are close to $1$. Conversely, the volume increases when the directions are mutually orthogonal, which maximizes the total dispersion among them. Fig.~\ref{corr} plots the R@1 values of zero-shot retrieval on MSR-VTT with respect to the volume, showing a clear negative correlation between volume and performance. 
	
	Unlike pairwise cosine similarity, which only captures alignment between two modalities, the barycenter simplex volume offers a rich higher-order understanding of multimodal structure. It not only measures the holistic geometric alignment of each modality toward the barycenter, but also preserves inter-modal interactions, yielding a more compact multimodal joint representation space. In this sense, the volume serves as a non-trivial metric for measuring $n$-modality alignment.
	
	\textbf{Geometric Intuition.}  Geometrically, traditional methods rely on asymmetric alignment, forcing non-anchor modalities (e.g., video) to conform to a modality-specific anchor (e.g., text). In contrast, BaryBind establishes a shared "geometric consensus" by optimizing a Wasserstein barycenter across all modalities. Instead of leaning towards a fixed specific modality, the multimodal embeddings of BaryBind converge toward the central manifold supported by the refined WB anchor. This balanced constraint shares the optimization burden and mitigates modality-specific bias, logically explaining the reduced T2V/V2T performance gap observed in our experiments.

	\section{Experimental Details and More Results}
	\label{exp-d}
	
	\subsection{Training algorithm}
	\label{algorithm}
	\begin{algorithm}[!h]
		\caption{Training Algorithm}  
		\textbf{Input}: Per-modality encoders $E_{\phi_{1:K}}$; Multimodal data set $\mathcal{S}_{1:K}$; NN-based WB map $T_\theta$; NN-based potentials $f_{\omega_{1:K}}$; Learning rate $\gamma$\\ 
		\textbf{Output}: Learned $T_{\theta}$,$E_{\phi_{1:K}}$,$f_{\omega_{1:K}}$
		\begin{algorithmic}[1]
			\label{algo}
			\WHILE{not converged}
			\STATE Sample multimodal batch data $\bm z_{1:K}\sim\mathcal{S}_{1:K}$
			\STATE Extract embeddings $\{\bm m_{k} = E_{\phi_k}(\bm z_{k}) \sim \mathbb{P}_k\}_{k=1}^K$
			\STATE Compute WB anchor $\bm b = T_\theta(\bm m_0)$
			\STATE Compute $\mathcal{L}_{\mathrm{MWB}}$ by Eqn. (\ref{Lbary})
			\STATE Compute $\mathcal{L}_{\mathrm{BVC}}, \mathcal{L}_{\mathrm{DAM}}, \mathcal{L}$ by Eqns. (\ref{eq:vcl}), (\ref{bdm}),  (\ref{tLoss})
			\STATE \# Optimize network blocks
			\STATE $\omega_k = \omega_k + 2\gamma \frac{\partial \mathcal{L}_{\mathrm{MWB}}}{\partial\omega_k}$, $k=1,\cdots, K$ 
			\STATE $\phi_k = \phi_k - \gamma \frac{\partial \mathcal{L}}{\partial \phi_k}$, $k=1,\cdots, K$
			\STATE $\theta = \theta - \gamma \frac{\partial \mathcal{L}}{\partial\theta}$

			%\STATE $\mathcal L^f_{\mathrm{MWB}}\leftarrow\frac{1}{|B|}\sum_{\bm z_k\in B}\lambda_kf_{\omega_k}(T_\theta(\bm z_k))$; 
			%\STATE	Update $\omega_{1:K}$ by using $\frac{\partial \mathcal L^f_{MWB}}{\partial \omega_{1:K}}$;
			%\FOR{$t=0,\cdots,n_T$}
			%\STATE Sample batches $B$ with $\bm z_k\sim\mathbb P_k,k\in\bar K$;
			%\STATE $\mathcal \displaystyle \mathcal L^T_{\mathrm{MWB}}\leftarrow\frac{1}{|B|}\bigg\{\sum_{\bm m_k\in B}\lambda_k\big[\|\bm m_k-T_\theta(\bm m_k)\|-f_{\omega_k}(T_\theta(\bm m_k))\big]\bigg\}$;
			%\STATE	Update $T_\theta$ by using $\frac{\partial \mathcal L^T_{\mathrm{MWB}}}{\partial\theta}$;
			%\ENDFOR
			\ENDWHILE
		\end{algorithmic} 
	\end{algorithm}
	
	\subsection{Experimental Details}
	\label{details}
	We perform both continued pretraining on top of VAST models and training from scratch of the BaryBind model using 2×A100 GPUs. For continued pretraining, We sampled 150,000 examples from the VAST-27M dataset \citep{Chen2023VASTAV} due to the unavailability of the full dataset. Each example consists of a video (comprising frames and an audio track) paired with a corresponding caption. We further evaluate the training-from-scratch models on MSR-VTT and ActivityNet in \S\ref{scratch}. The model backbone follows the VAST multimodal encoder, utilizing BERT-B, modality-specific audio and vision encoders, and other modality encoders as applicable, resulting in a total of approximately 1.3 billion parameters.  
	
	During the continued pretraining phase, we use a single frame and a single 10-second audio clip per sample; in the training-from-scratch setting, we use eight randomly sampled video frames and a single 10-second audio clip per video.
	
	For the continued pretraining, BaryBind is trained for 10k steps on the 150k-sample subset of VAST27M. Retrieval performance is evaluated every 100 steps on the MSR-VTT test set, and the checkpoint with the best performance is selected. We employ an initial learning rate of 1e-4 with a linear decay schedule and a batch size of 256. In the training-from-scratch experiments, all encoders are trained from scratch on the MSR-VTT training dataset for 4 epochs, using an initial learning rate of 1e-4 with a linear decay schedule and a batch size of 64.
	
	\begin{table}[!h]
		\centering
		\caption{Overview of multimodal benchmarks used for downstream evaluation.}
		\label{tab:datasets}
		\begin{adjustbox}{width=0.8\textwidth,center}
			\begin{tabular}{lcccccc}
				\toprule
				Benchmark       & Modalities                          & Train         & Val         & Test        & \# Frames (train) &\# Frames (test) \\
				\midrule
				DiDeMo        & Video + Text + Audio               & 8,394            & 1,065           & 1,003           & 8 & 32\\
				ActivityNet   & Video + Text + Audio               & 10,009         & —           & 4,917           & 8  & 32 \\
				MSR‑VTT       & Video + Text + Audio + Subtitle   & 9,000  & —   & 1,000           & 8 & 8 \\
				VATEX         & Video + Text + Audio + Subtitle   & 14,060         & —               & 431               & 8 & 16 \\
				VGGSound      & Audio + Video + Text              & —                & —               & ~5000  & - & 8 \\
				\bottomrule
		\end{tabular}\end{adjustbox}
	\end{table}
	
	Table~\ref{tab:datasets} summarizes the modality configuration, data statistics, and frame settings across benchmarks.
	
	\noindent\textbf{MSR-VTT}~\citep{xu2016msr} is a widely used benchmark for video-text retrieval. It contains 10,000 video clips, each paired with approximately 20 textual captions, totaling around 200,000 captions. We use 9,000 videos for training and 1,000 for testing, with 8 frames sampled per video.
	
	\noindent\textbf{DiDeMo}~\citep{anne2017localizing} consists of 10,000 long-form videos, each annotated with 4 temporally ordered paragraph descriptions. It is mainly used for moment localization retrieval. The official split includes 8,394/1,065/1,003 videos for training/validation/testing, and 12 frames are sampled per video.
	
	\noindent\textbf{ActivityNet}~\citep{caba2015activitynet} contains about 20,000 YouTube videos with a total duration of approximately 180 hours, annotated with multiple temporal sentence descriptions. We use the official training set (10,009 videos) for training and the validation set (4,917 videos) for downstream testing. The number of sampled frames per video is 8.
	
	\noindent\textbf{VATEX}~\citep{wang2019vatex} includes around 25,000 English videos, each annotated with 10 English captions. It is commonly used for four-modality text retrieval tasks. We adopt 14,060 videos for training and 431 for testing.
	
	\noindent\textbf{VGGSound5K}~\citep{chen2020vggsound} is a 5,000-video subset of VGGSound, containing diverse audio event categories (typically 310 classes) along with corresponding video frames and subtitles. Each video clip
	in the dataset has a duration of 10 seconds and is annotated with a single label corresponding to the predominant sound event occurring within the clip. The dataset covers a wide spectrum of audio events, including human actions, animal vocalizations, natural phenomena, and mechanical sounds.It is commonly used for audio-video multimodal classification tasks. 
	
	\subsection{Broader Multimodal Understanding and Generation}
	
Multimodal learning has recently expanded beyond conventional image--text understanding to complex temporal, embodied, and generative scenarios. In autonomous systems, StreamVLO integrates visual and LiDAR observations with spatio-temporal correlation modeling for robust streaming odometry~\cite{liu2026streamvlo}, while DriveVA jointly models future videos and driving actions to transfer video-generation priors to zero-shot planning~\cite{liu2026driveva}. UNIVERSE further unifies video and trajectory generation within a mask-modulated architecture~\cite{liu2026universe}, and OmniDrive coordinates language, geometry, and multi-view visual signals for controllable driving-world generation~\cite{meng2026omnidrive}. These developments highlight the growing need to organize heterogeneous sensory and semantic information within a coherent representation space. Reliable multimodal understanding additionally requires preserving and faithfully exploiting visual evidence. MemVR mitigates hallucinations by reinjecting visual information into the model's memory space when relevant evidence has faded~\cite{zoulook}. HoloV instead studies holistic visual-token retention, showing that maintaining globally distributed visual context is important for efficient multimodal reasoning~\cite{zou2026don}. OPPO further calibrates generation preferences according to the strength of visual evidence, improving visual grounding and reducing hallucinations~\cite{zou2026clearer}. Related challenges also arise in visual generation and evaluation: ARGUS aggregates multi-view identity evidence for subject-consistent video generation~\cite{meng2026argus}, MST-CLIPIQA decouples semantic coherence from perceptual distortions through multi-scale vision--language alignment~\cite{meng2026decoupling}, and KAE develops unified knowledge embedding for few-shot image generation~\cite{xu2025keep}. Collectively, these studies demonstrate that modern multimodal systems must reconcile heterogeneous modalities, granularities, and objectives while retaining both global semantics and modality-specific information. BaryBind addresses this general representation challenge from an optimal-transport perspective, using a multimodal Wasserstein barycenter as a shared anchor to bind multiple modalities without collapsing their complementary structures.
	
	\subsection{Finetuning Cross-modal Retrieval Results}
	We present the fine-tuning results in Table~\ref{table:videoText_finetuning}, which shows BaryBind maintains its advantage, outperforming prior models such as InternVideo, VideoCLIP-XL, and VAST. For instance, our T-VA setting achieves 60.3 R@1 on MSR-VTT and 72.1 on ActivityNet for T2V, surpassing VAST by +4.5 and +2.3 points, respectively. The performance further improves under T-VAS, where BaryBind reaches 65.7 R@1 on MSR-VTT, indicating that our binding strategy generalizes well with additional modalities and scales effectively with fine-tuning.
	\begin{table*}
		\caption{\textbf{Finetuning} text-to-video (T2V) and video-to-text (V2T) retrieval results in terms of Recall at 1 score (R@1). Results from our baseline \sethlcolor{customred}\hl{VAST} and  \sethlcolor{customgreen}\hl{BaryBind} are highlighted accordingly.}
		\label{table:videoText_finetuning}
		\vspace{-0.2cm}
		\setlength{\tabcolsep}{10pt}
		\renewcommand{\arraystretch}{1}
		\resizebox{\textwidth}{!}{
			\begin{tabular}{lc cc cc cc cc}
				\toprule
				\multirow{2}{*}{Methods}&\multirow{2}{*}{Modality} & \multicolumn{2}{c}{MSR-VTT} & \multicolumn{2}{c}{DiDeMo} & \multicolumn{2}{c}{ActivityNet}  & \multicolumn{2}{c}{VATEX}   \\ 
				\cmidrule(lr){3-4}
				\cmidrule(lr){5-6}
				\cmidrule(lr){7-8}
				\cmidrule(lr){9-10}
				&  & T2V    & V2T   & T2V     & V2T  & T2V    & V2T  & T2V    & V2T   \\ \midrule
				
				% UMT \citep{liu2022umt}                        & T-VA & 33.3   &    & 34.0     &   & 31.9   &   & -&-\\
				% OmniVL \citep{Wang2022OmniVLOF} & T-V & 34.6    &    & 33.3    &   & -   & -  & -&-\\
				
				CLIP4Clip \citep{Luo2021CLIP4ClipAE} & T-V & 45.6  & 45.9     & 43.0 & 43.6     & 40.3  & 41.6  & 63.0 & 78.3 \\
				% TVTSv2 \citep{Zeng2023TVTSv2LO} & T-V & 38.2       &       & 34.6       &      & -     & -  & -&-\\
				
				% VideoCoCa \citep{yan2022videococa}  & T-V & 34.3    & 64.7   & -         & -     & 34.5     & 33.0 & 20.3 &  \\
				% Norton \citep{lin2024norton}            & T-V         &   10.7  &    &     -     &    -  &    -  &  - & 24.2 &  \\ 
				% ImageBind \citep{Girdhar2023ImageBindOE}  & T-V & 36.8    &    & -        & -     & -    & -  & -&-\\
				InternVideo-L \citep{wang2022internvideo}  & T-V & 53.1 &    54.4   & 57.9     & 59.1    & 62.2      & 62.8 & 69.8 & 80.6 \\

				mPLUG-2 \citep{Xu2023mPLUG2AM}  & T-V & 53.1 &    -   & 56.4     & -     & -      & -  & -&-\\
				
				ViCLIP \citep{wang2024internvid}  & T-V & 52.5  & 51.8      & 49.4      & 50.2     & 49.8    & 48.1 & - & - \\
				T-MASS \citep{Wang2024TextIM}  & T-VA & 52.7 &    -   & 53.3     & -     & -      & -  & 65.6 &-\\
				VALOR-L \citep{liu2024valor}  & T-VAS & 54.4 &    -   & 57.6     & -     & 63.4      & -  & 76.9&-\\
				VideoCLIP-XL  \citep{wang2024videoclip}&T-V&54.6&54.0&62.3&62.7&58.4&59.2&-&-\\
				TempMe \citep{shen2025tempme}&T-V&49.0&47.6&48.0&48.4&44.9&45.3&69.6&71.8\\
				\midrule
				\rowcolor{customred} VAST \citep{Chen2023VASTAV}   & T-VA & 55.8   &  57.6  & 65.6 & 62.0 & 68.8 & 66.7  & 86.9 & 84.1  \\
				\rowcolor{customred} VAST \citep{Chen2023VASTAV}   & T-VAS & 56.6   &   57.6  & - & - & - & -  & 87.5 & 84.0 \\

				\midrule
				\rowcolor{customgreen}	BaryBind   & T-V & 57.4 &  57.8  &  67.2   & 64.6  &  67.3   & 65.2 &    85.0  & 82.4 \\
				\rowcolor{customgreen}	BaryBind   & T-VA &  61.0  & 61.4  &  \textbf{69.6}   & \textbf{66.3}  & \textbf{73.5}    & \textbf{70.2}  & 88.2 & 85.2 \\
				\rowcolor{customgreen} 	BaryBind         & T-VAS  &  \textbf{65.7}   & \textbf{65.4}  &   -    & -  &  -    & - & \textbf{89.5}  &  \textbf{86.4}   \\
				\bottomrule
		\end{tabular}}
	\end{table*}
	
	\subsection{Clarification on Distinction from Prior Anchor-based Approaches}
	\label{distinction}
	Our core contribution is the introduction of Wasserstein Barycenter (WB) optimization for establishing a balanced representation space. This differs fundamentally from pure anchor-based alignment approaches (e.g., GRAM and Triangle), by introducing an additional optimization process prior to alignment. This methodology is presented in Section~3, especially Section~3.2 with theoretical analysis. Specifically, BaryBind introduces a learnable WB anchor via WB optimization. The WB optimization transports modality-specific distributions toward a barycenter, defined as the distribution closest to all modality-specific distributions under the Wasserstein distance, ensuring multimodal semantic consensus for subsequent volumetric alignment. Both comparisons and ablations validate that the WB optimization is the main contributor to the superior performance.
	
	On the other hand, along the line of alignment, GRAM and Triangle define a parallelotope and a triangle, respectively, both centered at the origin. In contrast, BaryBind constructs a \textbf{simplex} spanned by the WB anchor and gap vectors. Contrasting this volume enforces paired multimodal data toward a shared representation. This mechanism provides an optimized geometric reference (the WB anchor), followed by contrastive alignment guided by the WB anchor, which is substantially different from pure anchor-based methods.
	
	The advantage of the WB-based representation is further supported by results across multiple tasks, including multimodal retrieval (consistently smaller gaps across benchmarks, and improved robustness under missing-modality settings), multimodal classification, and video-QA. Ablations  further show that removing WB leads to consistent performance drops, indicating that it is the primary contributor.
	
		\subsection{Scalability of BaryBind with Increasing Modality Number}
	\begin{table}[!h]
		\centering
		\caption{Computation time (in seconds) of similarity metrics vs. number of modalities,  measured on an NVIDIA A100 GPU with batch size $B=64$ and embedding dimension $D=512$.}
		\label{computation}
		\resizebox{\textwidth}{!}{
			\begin{tabular}{lcccccc}
				\midrule
				\textbf{Number of modality } $n$ & \textbf{2} & \textbf{3} & \textbf{4} & \textbf{5} & \textbf{10} & \textbf{20} \\
				\midrule
				Pairwise cosine similarity & $3.0 \times 10^{-7}$ & $7.0 \times 10^{-7}$ & $1.0 \times 10^{-6}$ & $1.3 \times 10^{-6}$ & $2.9 \times 10^{-6}$ & $4.9 \times 10^{-6}$ \\
				Barycenter simplex volume    & $4.9 \times 10^{-6}$ & $6.8 \times 10^{-6}$ & $5.9 \times 10^{-6}$ & $9.8 \times 10^{-6}$ & $3.6 \times 10^{-5}$ & $8.8 \times 10^{-5}$ \\
				\bottomrule
		\end{tabular}}
	\end{table}
	\begin{wrapfigure}[16]{R}{0.35\textwidth}
		\centering
		\vspace{-0.4cm}
		\includegraphics[width=0.9\linewidth]{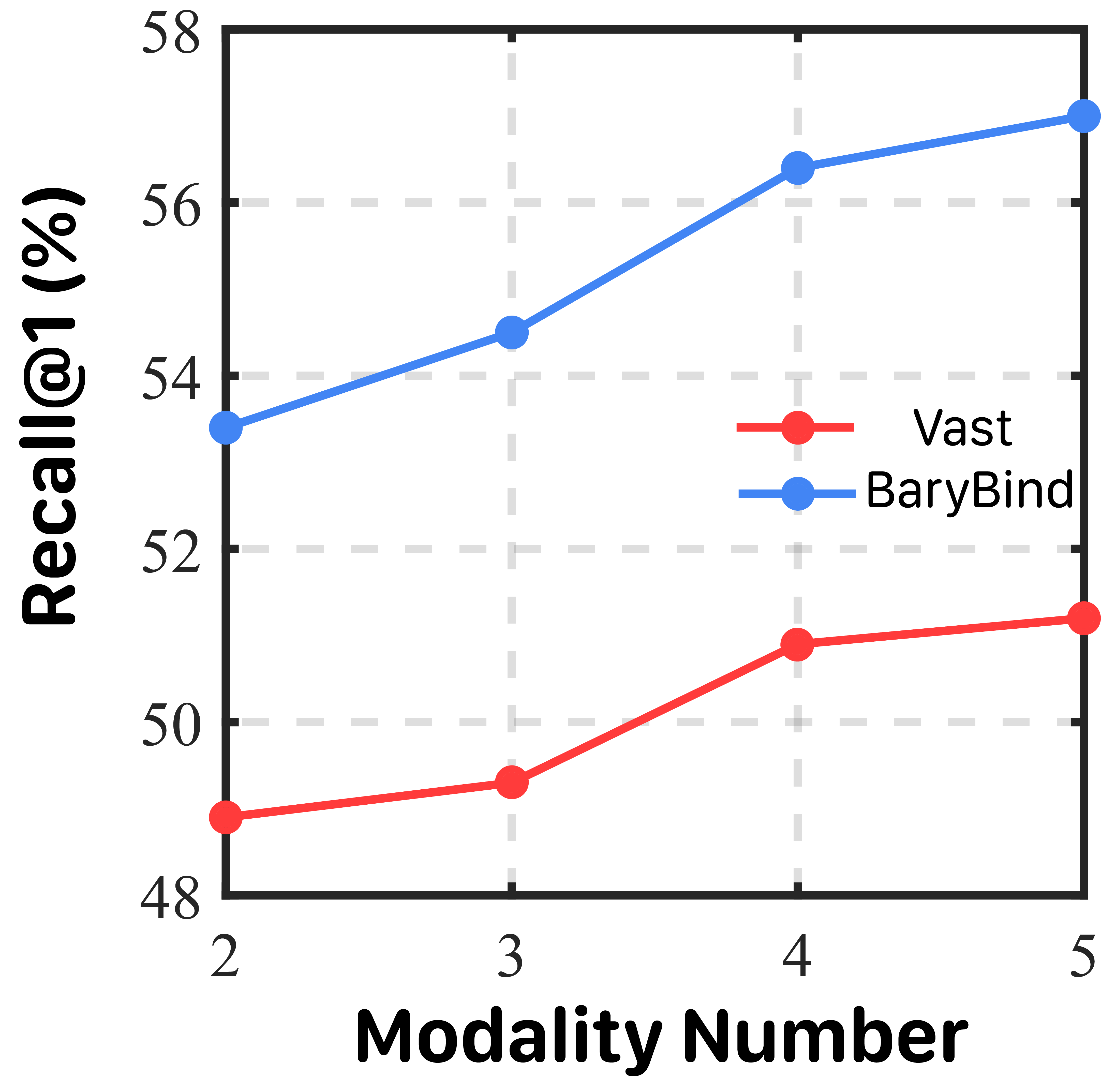}
		\caption{Zero-shot text-to-video retrieval R@1 results as scaling from 2 (T-V) modalities to 5 (T-VASD) modalities.}
		\label{fig:scalability}
	\end{wrapfigure}
	We evaluate the computation efficiency of different similarity metrics with varying modality number $n$. For each metric, we sample $B=64$ sets of $n$ vectors in $\mathbb{R}^D$ with $D=512$, simulating multimodal embeddings. As shown in Table~\ref{computation}, the simplex volume remains computationally efficient and scales reasonably as the modality number increases. The negligible overhead incurred when extending to more modalities highlights the volume as a non-trivial and scalable metric for assessing $n$-modality alignment.
	
	To evaluate how BaryBind scales from bimodal (e.g., text-vision) setups to richer multimodal settings, we progressively expand the input from a basic text-video (T-V) pair to more complex configurations on the MSR-VTT dataset: text-video-audio (T-VA), text-video-audio-subtitle (T-VAS), and finally text-video-audio-subtitle-depth (T-VASD). The depth modality is derived using ChronoDepth~\citep{shao2025learning} and integrated via an additional lightweight head attached to the vision encoder. As shown in Fig.~\ref{fig:scalability}, BaryBind consistently improves Recall@1 as the number of modalities increases, significantly outperforming the VAST baseline on MSR-VTT across all configurations. This highlights the scalability of BaryBind in practical multimodal understanding as it effectively integrates more modalities to shape a richer semantic space. 
	
	\subsection{Robustness on different WB initializers}
	\label{sec:anchor_rationale}

	BaryBind learns the \textbf{Wasserstein Barycenter (WB)} anchor $\boldsymbol {b}$ that serves as a balanced representation aiming to approximate the ideal geometric semantic center.  We utilize the specific modality as a semantic warm-start to initialize the barycenter mapping, and the  $\boldsymbol{b}$ is derived from the initial specific modality $\boldsymbol{m_0}$, i.e.,: $\boldsymbol{b} = T_\theta(\boldsymbol m_{0})$. This allows us to leverage the rich semantic priors to accelerate convergence, while the MWB objective ensures the final representation shifts away from the modality bias towards the geometric center. Crucially, although $T_\theta$ takes $m_0$ as input, its optimization gradients are derived from the Wasserstein distance to all $K$ modalities simultaneously. Therefore, the direction of the update actively pulls the embedding out of the modality-specific manifold.
	
	To verify the robustness of the WB initializer choice, we perform ablation on using vision, text, and  T+V+A (arithmetic fusion) as the initializer across multimodal tasks. From Tables~\ref{tab:anchor_tasks} and \ref{tab:msrvtt_qa}, we observe two findings. (1) Simply choosing a specific modality or fused embedding as the anchor without our barycenter optimization leads to inferior performance and modality imbalance. (2) With WB-driven optimization, all initializations (including the symmetric T+V+A fused initialization) gain clear performance boosts, indicating that the WB formulation itself, rather than the initial anchor choice, is the key contributor to balanced performance. Notably, the text-driven WB performs comparably to the fused-driven and video-driven ones (e.g., 55.6\% vs. 55.8\% and 55.2\% on V+A setting of VGGSound), reflecting the high robustness of BaryBind to different initializations.
	
	\begin{table}[!h]
		\centering
		\caption{
			\textbf{Ablation on the anchor selection under the T-VA setting.} }
		\setlength{\tabcolsep}{8pt}
		\resizebox{1.0\linewidth}{!}{
			\begin{tabular}{clllllll}
				\toprule
				\multirow{3}{*}{Anchor Modality} 
				& \multicolumn{3}{c}{VGGSound} 
				& \multicolumn{4}{c}{MSR-VTT}  \\
				\cmidrule(lr){2-4} \cmidrule(lr){5-8}
				& V 
				& A
				& V + A
				& \multicolumn{2}{c}{T2V} 
				& \multicolumn{2}{c}{V2T} \\
				\cmidrule(lr){2-2} 	\cmidrule(lr){3-3} \cmidrule(lr){4-4}  	\cmidrule(lr){5-6} 	\cmidrule(lr){7-8}
				& Acc@1 & Acc@1 & Acc@1 & R@1 & R@5 & R@1 & R@5 \\
				\midrule
				Video (w/o WB opt.)& 45.8 & 39.6 & 46.5 & 50.9 & 72.6 & 47.4 & 74.8\\
				Text (w/o WB opt.)& 46.3 & 40.3 & 48.1 & 51.5 & 72.2 & 46.8 & 73.1 \\
				T+V+A (w/o WB opt.) & 46.1 & 41.8 & 50.5 & 51.2 & 73.5 & 46.9 & 74.5 \\
				\midrule
				Video-initialized WB 
				& 47.9 \textbf{\textbl{\tiny\!(+2.8)}} 
				& 44.8 \textbf{\textbl{\tiny\!(+5.2)}} 
				& 55.2 \textbf{\textbl{\tiny\!(+8.7)}} 
				& 54.9 \textbf{\textbl{\tiny\!(+4.0)}} 
				& 74.6 \textbf{\textbl{\tiny\!(+2.0)}} 
				& 52.9 \textbf{\textbl{\tiny\!(+5.5)}} 
				& {78.5 \textbl{\tiny\!(+3.7)}} \\
				
				Text-initialized WB  
				& \textbf{48.3 \textgr{\tiny\!(+2.0)}} 
				& \textbf{45.6 \textgr{\tiny\!(+5.3)}} 
				& 55.6 \textbf{\textgr{\tiny\!(+7.5)}} 
				& 56.0 \textbf{\textgr{\tiny\!(+4.5)}} 
				& \textbf{75.4 \textgr{\tiny\!(+3.2)}} 
				& 53.2 \textbf{\textgr{\tiny\!(+6.4)}} 
				& 78.1 \textbf{\textgr{\tiny\!(+5.0)}} \\
				
				T+V+A initialized WB  
				& \textbf{48.3 \textrd{\tiny\!(+2.2)}} 
				& 45.2 \textbf{\textrd{\tiny\!(+3.4)}} 
				& \textbf{55.8 \textrd{\tiny\!(+5.3)}} 
				& \textbf{56.1 \textrd{\tiny\!(+4.9)}} 
				& 75.2 
				& \textbf{53.6 \textrd{\tiny\!(+6.7)}} 
				& \textbf{78.8 \textrd{\tiny\!(+4.3)}}\\
				\bottomrule
		\end{tabular}}
		\vspace{-0.4cm}
		\label{tab:anchor_tasks}
	\end{table}

	\begin{table}[!h]
		\centering
		\caption{Comparison of arithmetic fusion and WB-based models on MSR-VTT and VideoQA under different modalities}
		\label{tab:msrvtt_qa}
		\footnotesize
		\setlength{\tabcolsep}{5pt}
		\begin{tabular}{lccccc}
			\toprule
			\multirow{2}{*}{Anchor} & \multirow{2}{*}{Modality} & \multicolumn{2}{c}{Retrieval (R@1)} & \multicolumn{2}{c}{VideoQA (Acc)} \\
			\cmidrule(lr){3-4} \cmidrule(lr){5-6}
			& & T2V & V2T & MSRVTT-QA & MSVD-QA \\
			\midrule
			T+V+A (w/o WB opt.) & T-VA & 51.2 & 46.9 & 47.6 & 55.1 \\
			T+V+A-initialized WB & T-VA & \textbf{56.1} & \textbf{53.6} & \textbf{52.0} & \textbf{60.3} \\
			Text-initialized WB & T-VA & \textbf{56.0} & \textbf{53.2} & \textbf{52.2} & \textbf{60.3} \\
			\midrule
			T+V+A+S (w/o WB opt.) & T-VAS & 51.7 & 47.3 & 50.3 & 57.9 \\
			T+V+A+S-initialized WB & T-VAS & \textbf{57.1} & \textbf{53.8} & \textbf{55.7} & \textbf{63.8} \\
			Text-initialized WB & T-VAS & \textbf{57.2} & \textbf{53.9} & \textbf{55.4} & \textbf{63.6} \\
			\midrule
			T+V+A+S+D (w/o WB opt.) & T-VASD & 51.8 & 47.2 & $-$ & $-$ \\
			T+V+A+S+D-initialized WB & T-VASD & \textbf{57.9} & \textbf{55.0} & $-$ & $-$ \\
			Text-initialized WB & T-VASD & \textbf{57.8} & \textbf{55.1} & $-$ & $-$ \\
			\bottomrule
		\end{tabular}
	\end{table}
	
		\subsection{Normalization Strategies for Polytope Volume Metric}
	
	\begin{wraptable}[13]{r}{0.45\textwidth} % 表格右侧，宽度为页面宽度的45%
		\centering
		\vspace{-0.1cm}
		\caption{Zero-shot generalization on MSR-VTT under different normalization strategies for the simplex volume.}
		\resizebox{0.85\linewidth}{!}{
			\begin{tabular}{lccc}
				\toprule
				Normalization & Setting & T2V & V2T \\
				\midrule
				$(V/2!)^{1/2}$ & T-V   & 53.9 & 52.2 \\
				$V$       & T-V   & 54.1 & 52.0 \\
				$(V/3!)^{1/3}$ & T-VA  & 54.8 & 52.9 \\
				$V$       & T-VA  & 55.1 & 52.7 \\
				$(V/4!)^{1/4}$ & T-VAS & 57.3 & 54.2 \\
				$V$       & T-VAS & 57.2 & 54.5 \\
				\bottomrule
		\end{tabular}}
		\label{tab:volume_normalization}
	\end{wraptable}
	
	While the raw barycenter simplex volume $V$ serves as a global alignment metric, it depends on the number of modalities and embedding dimensionality, which limits interpretability and cross-setting comparability. To address this, we consider normalization strategies such as $(V/n!)^{1/n}$ (viewing the WB simplex as an $n$-simplex), which scale the volume to better reflect per-modality contributions and make the metric more comparable across different modality configurations.
	
		Table~\ref{tab:volume_normalization} reports zero-shot retrieval results on MSR-VTT under different normalization strategies. We observe that applying normalization slightly reduces the absolute retrieval scores compared to the raw volume $V$, likely because normalization reduces the magnitude of the gradient signal from the volume metric, slightly weakening the alignment supervision. Nonetheless, the overall performance remains stable, indicating that $(V/n!)^{1/n}$ provides a reliable and interpretable metric without significantly sacrificing retrieval accuracy.
		
	\subsection{Hyperparameter Sensitivity Analysis}
	\label{sec:sens}
	
	We study the sensitivity of the loss weights in the total objective on the MSR-VTT validation set:
	\begin{equation}
		\mathcal{L} = \mathcal{L}_{\text{MWB}} + \alpha_1 \mathcal{L}_{\text{BVC}} + \alpha_2 \mathcal{L}_{\text{DAM}}.
	\end{equation}
	
	As reported in Table~\ref{tab:alpha1_sensitivity}, adjusting $\alpha_1$ on MSR-VTT shows that increasing the weight of the BVC loss gradually improves retrieval performance, and the best results are achieved when $\alpha_1=1$. A further increase leads to a performance decline, indicating that over-restricting barycentric geometry may weaken instance-level discrimination.
	
	\begin{table*}[!h]
		\centering
		\begin{minipage}{0.48\linewidth}
			\centering
			\caption{Sensitivity analysis of $\alpha_1$ on MSR-VTT validation set.}
			\label{tab:alpha1_sensitivity}
			\resizebox{0.9\linewidth}{!}{
				\begin{tabular}{c|ccccc}
					\toprule
					$\alpha_1$ & 0.3 & 0.6 & 1 & 1.5 & 1.8 \\
					\midrule
					T2V (R@1) & 56.2 & 55.7 & \textbf{56.5} & 56.0&55.3  \\
					V2T (R@1) & 58.0 & 57.5 & \textbf{58.3} & 57.9&56.5 \\
					\bottomrule
			\end{tabular}}
		\end{minipage}
		\hfill
		\begin{minipage}{0.48\linewidth}
			\centering
			\caption{Sensitivity analysis of $\alpha_2$ on MSR-VTT validation set.}
			\label{tab:alpha2_sensitivity}
			\resizebox{0.9\linewidth}{!}{
				\begin{tabular}{c|ccccc}
					\toprule
					$\alpha_2$ & 0.02 & 0.06 & 0.1 & 0.14&0.18 \\
					\midrule
					T2V (R@1)  & 55.6 & 55.8 & \textbf{56.5} & 56.2&56.0  \\
					V2T (R@1) & 57.6 & 57.2 & \textbf{58.3} & 57.9&57.7  \\
					\bottomrule
			\end{tabular}}
		\end{minipage}
	\end{table*}

	Table~\ref{tab:alpha2_sensitivity} reveals that $\alpha_2$ also has an optimal range, with $\alpha_2=0.1$ yielding the strongest retrieval performance. Excessively small or large weights cause suboptimal alignment due to either under-constrained or modality-biased instance matching.
	
	Overall, $\alpha_1=1$ and $\alpha_2=0.1$ deliver the highest multimodal alignment on the MSR-VTT validation set, demonstrating the complementary strengths of BVC and DAM.

	\subsection{Training from Scratch Evaluation}
	\label{scratch}
	\begin{wraptable}[11]{r}{0.45\textwidth} % 表格右侧，宽度为页面宽度的45%
		\centering
		\vspace{-0.5cm}
		\caption{Training-from-scratch  T2V and V2T results on MSR-VTT and ActivityNet. The training follows the T-VA setting.}
				\vspace{-0.1cm}
		\label{tab:scratch}
		\resizebox{\linewidth}{!}{
			\begin{tabular}{lcccc}
				\toprule
				\multirow{2}{*}{Methods}&\multicolumn{2}{c}{MSR-VTT}  & \multicolumn{2}{c}{ActivityNet}  \\
				\cmidrule(lr){2-3} \cmidrule(lr){4-5}
				&T2V&V2T&T2V&V2T\\
				\midrule
				VAST \citep{Chen2023VASTAV}     & 35.0 & 39.1 & 26.7 & 25.2 \\
				LanguageBind \cite{Zhu2023LanguageBindEV} &31.8&36.4&29.3&26.1\\
				GRAM \cite{cicchetti2025gramian}    & 35.7 & 39.4 & 27.2 &  29.1  \\
				Triangle \cite{cicchetti2025triangle} & 39.4 & 40.1 & 27.8 & 29.3 \\
				\midrule
				BaryBind & \textbf{44.8} & \textbf{43.2} & \textbf{33.6} & \textbf{34.2} \\
				\bottomrule
		\end{tabular}}
	\end{wraptable}
	
	Due to the unavailability of full VAST27M dataset \citep{Chen2023VASTAV}, we adopt continuing training based on the VAST model in \S \ref{4.1}.  Here we further evaluate the models trained from scratch on MSR-VTT and ActivityNet. Four representative models: VAST~\cite{Chen2023VASTAV}, which performs pairwise alignment with cosine-similarity heads and lightweight fusion layers; GRAM~\cite{cicchetti2025gramian}, Triangle~\cite{cicchetti2025triangle}, and the proposed BaryBind.
	Table~\ref{tab:scratch} reports the R@1 results for both text-to-video (T2V) and video-to-text (V2T) retrieval. BaryBind achieves the highest performance across all settings, surpassing VAST, GRAM, and Triangle on both datasets. On MSR-VTT, BaryBind improves over VAST by +6.4\% (T2V) and +3.8\% (V2T), while also outperforming GRAM and Triangle by comparable margins. A similar pattern appears on ActivityNet, where BaryBind reaches 30.8\% (T2V) and 32.4\% (V2T). The training curves in Fig. ~\ref{fig:loss_curve} show that BaryBind continues to refine the latent space throughout training, whereas the other methods plateau earlier. These observations indicate that a barycenter-based objective provides stronger structural guidance when no pretrained semantic prior is available, enabling the model to form a more coherent multimodal representation.

	\subsubsection{Effect of Relaxing the Congruence Constraint}
	
	The proposed formulation enforces the congruence constraint
	$\sum_k \lambda_k f_k=0$
	through a hard parameterization. To investigate the necessity of strictly enforcing the congruence constraint, we replace the hard parameterization with a soft penalty formulation and optimize the following objective:
	\begin{equation}
		\mathcal{L}_{\mathrm{soft}}
		=
		\mathcal{L}_{\mathrm{MWB}}
		+
		\eta
		\left\|
		\sum_k\lambda_k f_k
		\right\|_2^2 .
	\end{equation}
	During alternating optimization, we record the objective function for updating the potentials:
	\begin{equation}
		\mathcal{L}_{f}
		=
		-\sum_k \lambda_k
		\mathbb{E}_{m_k\sim\mathbb{P}_k}
		\left[
		f_k(T_\theta(m_k))
		\right]
		+
		\eta
		\left\|
		\sum_k\lambda_k f_k
		\right\|_2^2 ,
	\end{equation}
	
		\begin{wraptable}[9]{r}{0.48\textwidth}
		\vspace{-0.15in}
		\centering
		\caption{Stability analysis of potential optimization during alternating updates.}
		\label{tab:potential_stability}
		\small
		\begin{tabular}{c|c|c|c}
			\toprule
			Iterations & Mean $\mathcal{L}_{f}$ & Range & $\Delta$\\
			\midrule
			0--500 & -0.376 & [-1.000,-0.154] & 0.846\\
			500--1000 & -0.237 & [-0.309,-0.171] & 0.138\\
			1000--2000 & -0.181 & [-0.242,-0.134] & 0.108\\
			2000--3000 & -0.098 & [-0.138,-0.045] & 0.093\\
			3000--10000 & -0.030 & [-0.049,-0.011] & 0.038\\
			\bottomrule
		\end{tabular}
		\vspace{-0.1in}
	\end{wraptable}
	
	where $\eta=10$ is used in the experiment. The optimization dynamics are summarized in Table~\ref{tab:potential_stability}. Although the early optimization stage exhibits noticeable fluctuations, the potential optimization gradually stabilizes after approximately 3,000 iterations without persistent divergence.

	We further compare the hard and soft formulations in Table~\ref{tab:soft_constraint}. The soft penalty achieves comparable performance with only marginal degradation, demonstrating that the congruence constraint mainly enables a stable optimization.

		\begin{table}[!h]
		\centering
		\caption{Comparison between hard and soft congruence constraints.}
		\label{tab:soft_constraint}
		\small
		\begin{tabular}{c|c|c}
			\toprule
			Constraint & T2V/V2T R@1 & A+C Acc@1\\
			\midrule
			Hard constraint & 56.1/53.6 & 55.6\\
			Soft constraint & 55.9/53.2 & 55.1\\
			\bottomrule
		\end{tabular}
		\vspace{-0.1in}
	\end{table}
	
	\subsubsection{Stability of WB Map Adaptation under Encoder Distribution Shifts}
	
	During encoder fine-tuning, the feature distributions continuously evolve. We therefore analyze the adaptation dynamics of the WB map $T_\theta$. Specifically, we track the WB mapping loss:
	
	\begin{equation}
		\mathcal{L}_{T}
		\triangleq
		\sum_{k=1}^{K}
		\lambda_k
		\mathbb{E}_{m_k\sim\mathbb{P}_k}
		\left[
		\left\|
		m_k-T_\theta(m_0)
		\right\|_2^2
		-
		f_{\omega_k}(T_\theta(m_0))
		\right].
	\end{equation}
	
	\begin{wraptable}{r}{0.48\textwidth}
		\vspace{-0.15in}
		\centering
		\caption{Loss analysis of WB map $T_\theta$ adaptation during encoder fine-tuning.}
		\label{tab:wb_adaptation}
		\small
		\begin{tabular}{c|c|c|c}
			\toprule
			Iterations & Mean $\mathcal{L}_{T}$ & Range & $\Delta$\\
			\midrule
			0--500 &0.971 &[0.913,1.000]&0.087\\
			500--1000&0.602&[0.317,0.913]&0.597\\
			1000--2000&0.136&[0.073,0.330]&0.257\\
			2000--3000&0.052&[0.020,0.095]&0.075\\
			3000--10000&0.029&[0.000,0.022]&0.022\\
			\bottomrule
		\end{tabular}
		\vspace{-0.1in}
	\end{wraptable}
	
	As shown in Table~\ref{tab:wb_adaptation}, the WB map gradually adapts to the evolving encoder distributions. During the initial stage, the mapping loss changes as the WB map responds to encoder updates. A rapid adaptation phase is observed between 500 and 2,000 iterations, where the loss decreases substantially. After approximately 3,000 iterations, the variation becomes significantly smaller, indicating that the WB map reaches a stable adaptation regime with the evolving feature distributions.
		
	\subsection{Computational Overhead Comparison During Training}
	\label{overhead}
	We provide a comparison of computational overhead among VAST, GRAM, and BaryBind. All experiments are conducted on 2$\times$NVIDIA A100 80GB GPUs with mixed-precision (FP16/AMP). The additional computation in BaryBind mainly comes from the Wasserstein barycenter optimization and auxiliary alignment losses. The increase in per-step time remains moderate while providing improved multimodal alignment performance.
	
	\begin{table}[h]
		\centering
		\caption{Computational overhead comparison during training.}
		\label{tab:efficiency}
		\begin{tabular}{lccccc}
			\toprule
			\textbf{Model} & \textbf{Params} & \textbf{Batch size} & \textbf{Forward+Backward} & \textbf{Steps/Epoch} & \textbf{Time/Epoch} \\
			\midrule
			VAST & 1.28B & 64 & $\sim$8.8s & 2344 & $\sim$5.7h \\
			GRAM & 1.30B & 64 & $\sim$9.4s & 2344 & $\sim$6.1h \\
			BaryBind & 1.34B & 64 & $\sim$9.7s & 2344 & $\sim$6.3h \\
			\bottomrule
		\end{tabular}
	\end{table}
	
\subsection{Semantic Consensus in the Learned WB Space}

\textbf{Empirical illustration.}
To illustrate the semantic consensus property of the learned WB space, we
perform cross-modal retrieval through the WB space. Given a query from one
modality, it is first mapped into the WB space and then used to retrieve
samples from other modalities. As shown in Table~\ref{tab:wb_text_retrieval}
and Table~\ref{tab:wb_video_retrieval}, the retrieved samples consistently
share the same high-level semantic concepts (e.g., cooking and basketball),
while preserving modality-specific variations. These results empirically
demonstrate that the learned WB space captures shared semantic structures
across modalities.

\begin{table}[t]
	\centering
	\caption{Empirical illustration of semantic consensus through
		Text$\rightarrow$WB$\rightarrow$X retrieval.}
	\label{tab:wb_text_retrieval}

	\begin{adjustbox}{width=0.95\textwidth,center}
	\begin{tabular}{llll}
		\toprule
		Query & Retrieval Path & Modality (Retrievaled sample)&Caption \\
		\midrule
		
		\multirow{3}{*}{A man is cooking in the kitchen}
		& \multirow{3}{*}{Text$\rightarrow$WB$\rightarrow$Video}
		& Video8834 &
		A man is cooking in the kitchen; he states he will return later to add to the dish \\
		&& Video9687 &
		A man chopping lobster and taking off the shell \\
		&& Video9038 &
		A person is preparing some food \\
		
		\midrule
		
		\multirow{3}{*}{A man is cooking in the kitchen}
		& \multirow{3}{*}{Text$\rightarrow$WB$\rightarrow$Audio}
		& Audio (Video7376) &
		Broth is being added to a soup pot and stirred with a rubber spatula \\
		&& Audio (Video8834) &
		A man is cooking in the kitchen; he states he will return later to add to the dish \\
		&& Audio (Video7799) &
		A guy chops up garlic and pours it over chicken frying in a pan \\
		
		\midrule
		
		\multirow{3}{*}{A man is cooking in the kitchen}
		& \multirow{3}{*}{Text$\rightarrow$WB$\rightarrow$Subtitle}
		& Subtitle (Video7362) &
		A person is cooking on stage \\
		&& Subtitle (Video9687) &
		A man chopping lobster and taking off the shell \\
		&& Subtitle (Video8772) &
		The man is making a sauce in the kitchen \\
		
		\bottomrule
	\end{tabular}\end{adjustbox}
\end{table}

\begin{table}[t]
	\centering
	\caption{Empirical illustration of semantic consensus through
		Video$\rightarrow$WB$\rightarrow$X retrieval.}
	\label{tab:wb_video_retrieval}

			\begin{adjustbox}{width=0.95\textwidth,center}
	\begin{tabular}{llll}
		\toprule
		Query & Retrieval Path & Modality (Retrieved sample) & Caption \\
		\midrule
		
		\multirow{4}{*}{video of a basketball event}
		& \multirow{4}{*}{Video$\rightarrow$WB$\rightarrow$Text}
		& Text (Video7763) &
		People are playing basketball \\
		&& Text (Video9029) &
		A player is putting a basketball into the basket from distance \\
		&& Text (Video8124) &
		Basketball players making a shot in the last seven seconds \\
		&& Text (Video9013) &
		Basketball highlights of players scoring \\
		
		\midrule
		
		\multirow{3}{*}{video of a basketball event}
		& \multirow{3}{*}{Video$\rightarrow$WB$\rightarrow$Audio}
		& Audio (Video8124) &
		Basketball players making a shot in the last seven seconds \\
		&& Audio (Video7763) &
		People are playing basketball \\
		&& Audio (Video9013) &
		Basketball highlights of players scoring \\
		
		\midrule
		
		\multirow{3}{*}{video of a basketball event}
		& \multirow{3}{*}{Video$\rightarrow$WB$\rightarrow$Subtitle}
		& Subtitle (Video8124) &
		Basketball players making a shot in the last seven seconds \\
		&& Subtitle (Video9013) &
		Basketball highlights of players scoring \\
		&& Subtitle (Video8652) &
		A man runs into the crowd when trying to catch a basketball \\
		
		\bottomrule
	\end{tabular}\end{adjustbox}
\end{table}
	
	\subsection{More Quantitative Results on Scaling to More Modalities}
	
	To systematically evaluate the scalability of multimodal models with respect to the number of input modalities, we conduct experiments under four configurations, progressively increasing from two modalities (text and video) to five (text, video, audio, subtitle, and depth). This stepwise setup enables a controlled analysis of how each additional modality affects performance and alignment quality. For consistency and interpretability, text is used as the anchor modality throughout.

	As shown in Table~\ref{tab:scaling_modalities}, both VAST and BaryBind benefit from the inclusion of additional modalities, demonstrating improved performance on MSR-VTT and VATEX in terms of both T2V and V2T retrieval. Notably, BaryBind consistently outperforms VAST across all modality configurations and datasets, highlighting its stronger scalability and generalization capacity.

	\begin{table}[!h]
		\centering
		\caption{
			Downstream performance with increasing number of modalities.} 
		\setlength{\tabcolsep}{10pt}
		\renewcommand{\arraystretch}{1.0}  % 设置行距
		\resizebox{0.95\linewidth}{!}{
			\begin{tabular}{ccccccccccccc}
				\toprule
				\multicolumn{5}{c}{\multirow{2}{*}{Modalities}}&\multicolumn{4}{c}{MSR-VTT}&
				\multicolumn{4}{c}{VATEX}\\
				\cmidrule(lr){6-9} \cmidrule(lr){10-13}
				&&&&&\multicolumn{2}{c}{VAST}&\multicolumn{2}{c}{BaryBind}&\multicolumn{2}{c}{VAST}&\multicolumn{2}{c}{BaryBind}\\
				\midrule
				Text&Video&Audio&Sub.&Depth&T2V&V2T&T2V&V2T&T2V&V2T&T2V&V2T\\\midrule
				\cmark&\cmark&\xmark&\xmark&\xmark&48.7&43.2&54.2&52.0&78.8&77.0&82.3&79.8\\
				\cmark&\cmark&\cmark&\xmark&\xmark&49.3&43.7&56.0&53.2&80.0&77.3&84.2&81.3\\
				\cmark&\cmark&\cmark&\cmark&\xmark&50.9&47.9&57.2&53.6&82.1&78.7&84.6&83.5\\
				\cmark&\cmark&\cmark&\cmark&\cmark&51.2&49.3&57.9&54.4&82.4&79.2&84.9&83.8\\
				\bottomrule
		\end{tabular}}
		
		\vspace{-0.2cm}
		\label{tab:scaling_modalities}
	\end{table}
	
	\subsection{Additional Evaluations on VideoQA and Cross-modal Generation}
	\label{appendix:qa_gen}
	
	To further demonstrate the versatility and broader impact of BaryBind, we conduct additional experiments on Video Question Answering (VideoQA) and cross-modal generation tasks.
	
	\textbf{VideoQA.} Following the data and captioner setting of VAST \citep{Chen2023VASTAV}, we evaluate BaryBind on the VideoQA benchmarks MSRVTT-QA and MSVD-QA. As shown in Table~\ref{tab:videoqa}, BaryBind achieves competitive accuracy on both benchmarks and outperforms recent global alignment methods. Notably, the superior performance on QA tasks highlights the effectiveness of the learned representations in capturing fine-grained semantics.
	
	\textbf{Cross-modal generation.} We use the VGGSound dataset to evaluate the generation performance, which consists of approximately 200K video clips annotated with 309 sound classes. We utilize the captioner provided by VAST to generate video captions, using 1024 videos for testing and the remainder for training. We adopt a single encoder–decoder configuration compatible with UnCLIP-style generators, specifically a CLIP ViT-H/14 image–text encoder paired with the ImageBind audio encoder, compatible with the Stable UnCLIP decoder. We evaluate text-to-image (T2I), audio-to-image (A2I), and text-audio-to-image (TA2I) generation performance.
	
	\begin{table}[h]
		\centering
		\footnotesize
		\begin{minipage}[t]{0.48\textwidth}
			\centering
			\caption{Evaluation on VideoQA benchmarks. Methods utilizing audio or subtitle modalities are \textbf{bolded}.}
			\label{tab:videoqa}
			\vspace{5pt} 
			\begin{tabular}{lcc}
				\toprule
				Method & MSRVTT-QA & MSVD-QA \\
				\midrule
				mPLUG-2 & 48.0 & 58.1 \\
				VALOR-L & 49.2 & 60.0 \\
				\textbf{VAST} & 50.1 & 60.2 \\
				\textbf{GRAM} & 52.3 & 61.5 \\
				\textbf{Triangle} & 52.0 & 61.9 \\
				\rowcolor{customgreen} \textbf{BaryBind} & \textbf{55.4} & \textbf{63.6} \\
				\bottomrule
			\end{tabular}
		\end{minipage}
		\hfill
		\begin{minipage}[t]{0.48\textwidth}
			\centering
			\caption{Cross-modal generation results on VGGSound (FID $\downarrow$). T2I, A2I, and TA2I denote text-to-image, audio-to-image, and joint text-audio-to-image generation, respectively.}
			\label{tab:generation}
			\vspace{5pt}
			\begin{tabular}{lccc}
				\toprule
				Method & T2I & A2I & TA2I \\
				\midrule
				ImageBind & 50.1 & 53.6 & 46.8 \\
				VAST & 48.1 & 57.2 & 50.6 \\
				GRAM & 46.7 & 56.5 & 45.2 \\
				Triangle & 46.4 & 54.5 & 44.3 \\
				\rowcolor{customgreen} BaryBind & \textbf{43.6} & \textbf{50.3} & \textbf{38.6} \\
				\bottomrule
			\end{tabular}
		\end{minipage}
	\end{table}

	Qualitative examples of the generation results are provided in the right part of Figure~\ref{teaser}, illustrating the semantic consistency achieved by BaryBind.

	\subsection{Visualization of Top-1 Retrieval Results}
	
	To qualitatively assess the retrieval performance of BaryBind, we visualize top-1 text-to-video results compared with two strong baselines in Fig.~\ref{fig:retvis}. BaryBind integrates text, audio, video, and subtitle modalities during both training and inference. Each row shows five frames from the top-retrieved video along with the query subtitle. In the first example, BaryBind retrieves a beach party scene aligned with the query’s semantic and acoustic mood, while baselines return less relevant results. In the second case, BaryBind accurately matches a gameplay scene described by both visual and subtitle cues, whereas baselines retrieve unrelated content. The third example features a simple emotional phrase, “I’m scared.”, where BaryBind selects an animated video of a fearful cat-dog chase, while others fail to reflect the emotional context or core entities. These results demonstrate BaryBind’s ability to leverage complementary multimodal cues for precise and context-aware retrieval, effectively grounding both semantics and affect across diverse scenarios.

	\subsection{Limitations and Future Work}\label{limitations}
	\textbf{Limitation}: We note that the introduction of the iterative barycenter optimization involves a marginal increase in training computational overhead compared to simple pairwise alignment. However, as shown in Table \ref{tab:efficiency}, this cost is negligible and is well-justified by the notable gains in cross-modal retrieval accuracy.
	
	We consider the integration of BaryBind into large multimodal language models (LLMs) as a direction for future work. Specifically, we are interested in exploring how BaryBind can enhance the multimodal understanding capabilities of these models by providing a unified, balanced token space that is not centered around language. This approach could help alleviate the issue of ``hallucinations" in large multimodal models. Furthermore, by focusing on a more balanced representation across modalities, we aim to promote a deeper understanding of the physical world, moving beyond linguistic reasoning towards true multimodal perception that can more accurately capture and reason about real-world phenomena. In addition, we plan to incorporate reconstruction into the BaryBind framework to enable multimodal understanding and generation, broadening its applicability in both discriminative and generative multimodal tasks.
	
	\subsection{Impact Statement}
	\label{impact}
	This paper advances multimodal learning by leveraging optimal-transport–based barycenter theory to achieve more balanced multimodal representation learning. The potential societal impact includes enabling more efficient and robust approaches for general multimodal understanding. To the best of our knowledge, this work does not pose notable ethical or societal risks. Our goal is to support the responsible development of scalable and balanced multimodal learning methods and theories.
	
	\begin{figure}[!h]
		\centering
		\includegraphics[width=0.85\linewidth]{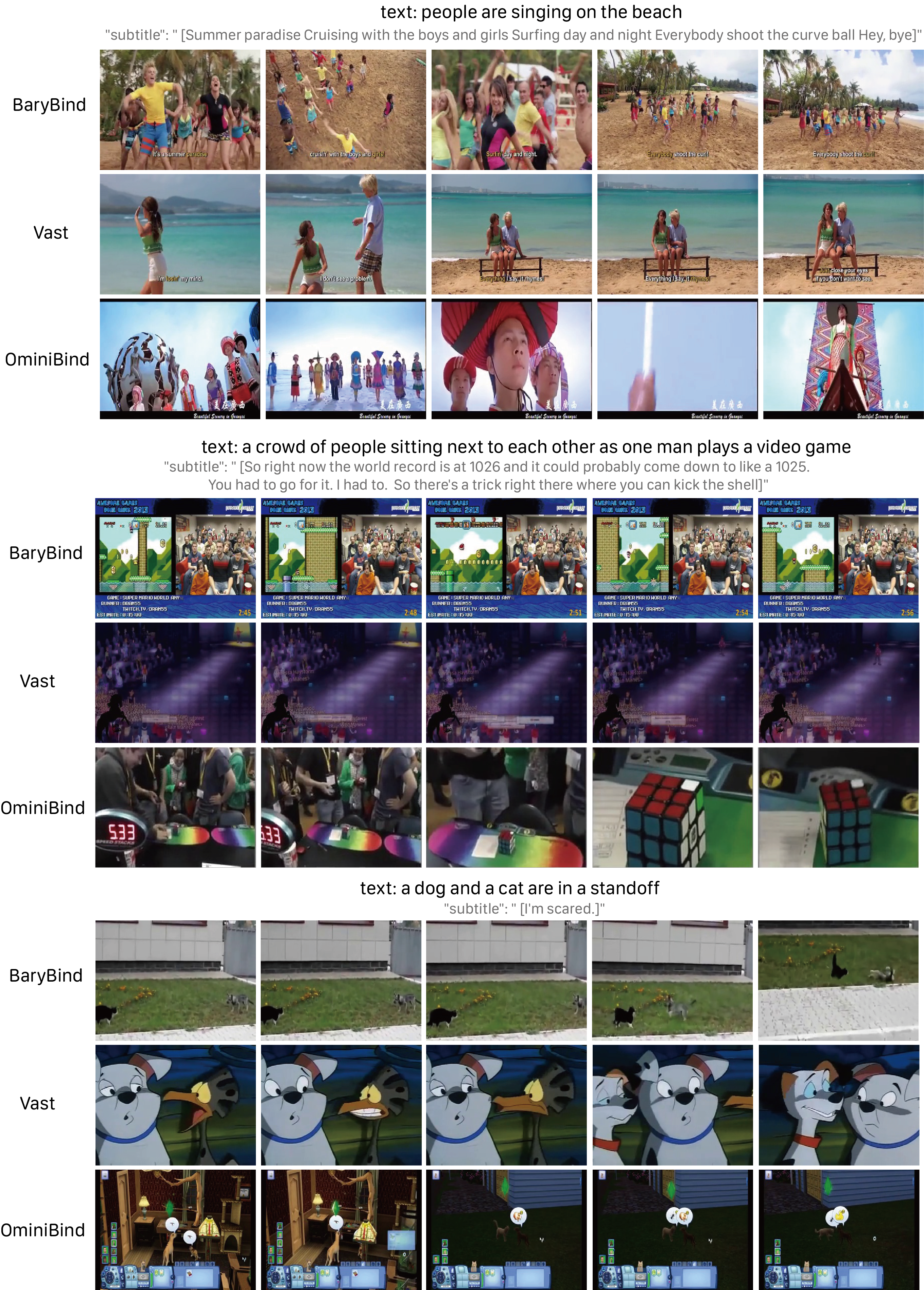}
		\caption{Visual results of text-to-video retrieval. We display 5 frames from the top-1 video.}
		\label{fig:retvis}
	\end{figure}
	
	%%%%%%%%%%%%%%%%%%%%%%%%%%%%%%%%%%%%%%%%%%%%%%%%%%%%%%%%%%%%
	\newpage
	\quad\\
	\newpage
	\input{checklist.tex}

\end{document}

%% file: math_commands.tex
\usepackage{amsmath,amsfonts,bm}

\def\eqref#1{equation~\ref{#1}}
\def\1{\bm{1}}

\DeclareMathAlphabet{\mathsfit}{\encodingdefault}{\sfdefault}{m}{sl}
\SetMathAlphabet{\mathsfit}{bold}{\encodingdefault}{\sfdefault}{bx}{n}

%% file: checklist.tex
\section*{NeurIPS Paper Checklist}

\begin{enumerate}
	
	\item {\bf Claims}
	\item[] Question: Do the main claims made in the abstract and introduction accurately reflect the paper's contributions and scope?
	\item[] Answer: \answerYes{} % Replace by \answerYes{}, \answerNo{}, or \answerNA{}.
	\item[] Justification: The main contributions are summarized in the introduction Section. These contributions are consistent with the results we presented in Section 3 and Section 4.
	\item[] Guidelines:
	\begin{itemize}
		\item The answer \answerNA{} means that the abstract and introduction do not include the claims made in the paper.
		\item The abstract and/or introduction should clearly state the claims made, including the contributions made in the paper and important assumptions and limitations. A \answerNo{} or \answerNA{} answer to this question will not be perceived well by the reviewers. 
		\item The claims made should match theoretical and experimental results, and reflect how much the results can be expected to generalize to other settings. 
		\item It is fine to include aspirational goals as motivation as long as it is clear that these goals are not attained by the paper. 
	\end{itemize}
	
	\item {\bf Limitations}
	\item[] Question: Does the paper discuss the limitations of the work performed by the authors?
	\item[] Answer: \answerYes{} % Replace by \answerYes{}, \answerNo{}, or \answerNA{}.
	\item[] Justification: We discuss the limitations and future work in Appendix \ref{limitations}.
	\item[] Guidelines:
	\begin{itemize}
		\item The answer \answerNA{} means that the paper has no limitation while the answer \answerNo{} means that the paper has limitations, but those are not discussed in the paper. 
		\item The authors are encouraged to create a separate ``Limitations'' section in their paper.
		\item The paper should point out any strong assumptions and how robust the results are to violations of these assumptions (e.g., independence assumptions, noiseless settings, model well-specification, asymptotic approximations only holding locally). The authors should reflect on how these assumptions might be violated in practice and what the implications would be.
		\item The authors should reflect on the scope of the claims made, e.g., if the approach was only tested on a few datasets or with a few runs. In general, empirical results often depend on implicit assumptions, which should be articulated.
		\item The authors should reflect on the factors that influence the performance of the approach. For example, a facial recognition algorithm may perform poorly when image resolution is low or images are taken in low lighting. Or a speech-to-text system might not be used reliably to provide closed captions for online lectures because it fails to handle technical jargon.
		\item The authors should discuss the computational efficiency of the proposed algorithms and how they scale with dataset size.
		\item If applicable, the authors should discuss possible limitations of their approach to address problems of privacy and fairness.
		\item While the authors might fear that complete honesty about limitations might be used by reviewers as grounds for rejection, a worse outcome might be that reviewers discover limitations that aren't acknowledged in the paper. The authors should use their best judgment and recognize that individual actions in favor of transparency play an important role in developing norms that preserve the integrity of the community. Reviewers will be specifically instructed to not penalize honesty concerning limitations.
	\end{itemize}
	
	\item {\bf Theory assumptions and proofs}
	\item[] Question: For each theoretical result, does the paper provide the full set of assumptions and a complete (and correct) proof?
	\item[] Answer: \answerYes{} % Replace by \answerYes{}, \answerNo{}, or \answerNA{}.
	\item[] Justification: All assumptions are stated before the theorems. Due to space constraints, we provide the complete proofs in Appendix \ref{proof}.
	\item[] Guidelines:
	\begin{itemize}
		\item The answer \answerNA{} means that the paper does not include theoretical results. 
		\item All the theorems, formulas, and proofs in the paper should be numbered and cross-referenced.
		\item All assumptions should be clearly stated or referenced in the statement of any theorems.
		\item The proofs can either appear in the main paper or the supplemental material, but if they appear in the supplemental material, the authors are encouraged to provide a short proof sketch to provide intuition. 
		\item Inversely, any informal proof provided in the core of the paper should be complemented by formal proofs provided in appendix or supplemental material.
		\item Theorems and Lemmas that the proof relies upon should be properly referenced. 
	\end{itemize}
	
	\item {\bf Experimental result reproducibility}
	\item[] Question: Does the paper fully disclose all the information needed to reproduce the main experimental results of the paper to the extent that it affects the main claims and/or conclusions of the paper (regardless of whether the code and data are provided or not)?
	\item[] Answer: \answerYes{} % Replace by \answerYes{}, \answerNo{}, or \answerNA{}.
	\item[] Justification: We provide all necessary details for reproduction in Section 4 and Appendix B.2.
	\item[] Guidelines:
	\begin{itemize}
		\item The answer \answerNA{} means that the paper does not include experiments.
		\item If the paper includes experiments, a \answerNo{} answer to this question will not be perceived well by the reviewers: Making the paper reproducible is important, regardless of whether the code and data are provided or not.
		\item If the contribution is a dataset and\slash or model, the authors should describe the steps taken to make their results reproducible or verifiable. 
		\item Depending on the contribution, reproducibility can be accomplished in various ways. For example, if the contribution is a novel architecture, describing the architecture fully might suffice, or if the contribution is a specific model and empirical evaluation, it may be necessary to either make it possible for others to replicate the model with the same dataset, or provide access to the model. In general. releasing code and data is often one good way to accomplish this, but reproducibility can also be provided via detailed instructions for how to replicate the results, access to a hosted model (e.g., in the case of a large language model), releasing of a model checkpoint, or other means that are appropriate to the research performed.
		\item While NeurIPS does not require releasing code, the conference does require all submissions to provide some reasonable avenue for reproducibility, which may depend on the nature of the contribution. For example
		\begin{enumerate}
			\item If the contribution is primarily a new algorithm, the paper should make it clear how to reproduce that algorithm.
			\item If the contribution is primarily a new model architecture, the paper should describe the architecture clearly and fully.
			\item If the contribution is a new model (e.g., a large language model), then there should either be a way to access this model for reproducing the results or a way to reproduce the model (e.g., with an open-source dataset or instructions for how to construct the dataset).
			\item We recognize that reproducibility may be tricky in some cases, in which case authors are welcome to describe the particular way they provide for reproducibility. In the case of closed-source models, it may be that access to the model is limited in some way (e.g., to registered users), but it should be possible for other researchers to have some path to reproducing or verifying the results.
		\end{enumerate}
	\end{itemize}

	\item {\bf Open access to data and code}
	\item[] Question: Does the paper provide open access to the data and code, with sufficient instructions to faithfully reproduce the main experimental results, as described in supplemental material?
	\item[] Answer: \answerNo{} % Replace by \answerYes{}, \answerNo{}, or \answerNA{}.
	\item[] Justification: While the data are publicly available, the code is not included in the supplemental material at this stage. However, we provide comprehensive implementation details in Section 4 and Appendix B.2 and algorithm in Appendix B.1 to ensure reproducibility.
	\item[] Guidelines:
	\begin{itemize}
		\item The answer \answerNA{} means that paper does not include experiments requiring code.
		\item Please see the NeurIPS code and data submission guidelines (\url{https://neurips.cc/public/guides/CodeSubmissionPolicy}) for more details.
		\item While we encourage the release of code and data, we understand that this might not be possible, so \answerNo{} is an acceptable answer. Papers cannot be rejected simply for not including code, unless this is central to the contribution (e.g., for a new open-source benchmark).
		\item The instructions should contain the exact command and environment needed to run to reproduce the results. See the NeurIPS code and data submission guidelines (\url{https://neurips.cc/public/guides/CodeSubmissionPolicy}) for more details.
		\item The authors should provide instructions on data access and preparation, including how to access the raw data, preprocessed data, intermediate data, and generated data, etc.
		\item The authors should provide scripts to reproduce all experimental results for the new proposed method and baselines. If only a subset of experiments are reproducible, they should state which ones are omitted from the script and why.
		\item At submission time, to preserve anonymity, the authors should release anonymized versions (if applicable).
		\item Providing as much information as possible in supplemental material (appended to the paper) is recommended, but including URLs to data and code is permitted.
	\end{itemize}

	\item {\bf Experimental setting/details}
	\item[] Question: Does the paper specify all the training and test details (e.g., data splits, hyperparameters, how they were chosen, type of optimizer) necessary to understand the results?
	\item[] Answer: \answerYes{} % Replace by \answerYes{}, \answerNo{}, or \answerNA{}.
	\item[] Justification: We specify all training and test details in Section 4 and Appendix B.2.
	\item[] Guidelines:
	\begin{itemize}
		\item The answer \answerNA{} means that the paper does not include experiments.
		\item The experimental setting should be presented in the core of the paper to a level of detail that is necessary to appreciate the results and make sense of them.
		\item The full details can be provided either with the code, in appendix, or as supplemental material.
	\end{itemize}
	
	\item {\bf Experiment statistical significance}
	\item[] Question: Does the paper report error bars suitably and correctly defined or other appropriate information about the statistical significance of the experiments?
	\item[] Answer: \answerNo{} % Replace by \answerYes{}, \answerNo{}, or \answerNA{}.
	\item[] Justification: We use fixed random seeds and deterministic configurations for all experiments to ensure reproducibility. Therefore, error bars are not reported.
	\item[] Guidelines:
	\begin{itemize}
		\item The answer \answerNA{} means that the paper does not include experiments.
		\item The authors should answer \answerYes{} if the results are accompanied by error bars, confidence intervals, or statistical significance tests, at least for the experiments that support the main claims of the paper.
		\item The factors of variability that the error bars are capturing should be clearly stated (for example, train/test split, initialization, random drawing of some parameter, or overall run with given experimental conditions).
		\item The method for calculating the error bars should be explained (closed form formula, call to a library function, bootstrap, etc.)
		\item The assumptions made should be given (e.g., Normally distributed errors).
		\item It should be clear whether the error bar is the standard deviation or the standard error of the mean.
		\item It is OK to report 1-sigma error bars, but one should state it. The authors should preferably report a 2-sigma error bar than state that they have a 96\% CI, if the hypothesis of Normality of errors is not verified.
		\item For asymmetric distributions, the authors should be careful not to show in tables or figures symmetric error bars that would yield results that are out of range (e.g., negative error rates).
		\item If error bars are reported in tables or plots, the authors should explain in the text how they were calculated and reference the corresponding figures or tables in the text.
	\end{itemize}
	
	\item {\bf Experiments compute resources}
	\item[] Question: For each experiment, does the paper provide sufficient information on the computer resources (type of compute workers, memory, time of execution) needed to reproduce the experiments?
	\item[] Answer: \answerYes{} % Replace by \answerYes{}, \answerNo{}, or \answerNA{}.
	\item[] Justification: We specify the hardware (2$\times$A100 GPUs) and the training duration in Appendix \ref{details}. Further details on computational overhead are provided in Appendix \ref{overhead}.
	\item[] Guidelines:
	\begin{itemize}
		\item The answer \answerNA{} means that the paper does not include experiments.
		\item The paper should indicate the type of compute workers CPU or GPU, internal cluster, or cloud provider, including relevant memory and storage.
		\item The paper should provide the amount of compute required for each of the individual experimental runs as well as estimate the total compute. 
		\item The paper should disclose whether the full research project required more compute than the experiments reported in the paper (e.g., preliminary or failed experiments that didn't make it into the paper). 
	\end{itemize}
	
	\item {\bf Code of ethics}
	\item[] Question: Does the research conducted in the paper conform, in every respect, with the NeurIPS Code of Ethics \url{https://neurips.cc/public/EthicsGuidelines}?
	\item[] Answer: \answerYes{} % Replace by \answerYes{}, \answerNo{}, or \answerNA{}.
	\item[] Justification: We have reviewed the NeurIPS Code of Ethics and confirm that our research, including data usage and methodology, adheres to all its guidelines.
	\item[] Guidelines:
	\begin{itemize}
		\item The answer \answerNA{} means that the authors have not reviewed the NeurIPS Code of Ethics.
		\item If the authors answer \answerNo, they should explain the special circumstances that require a deviation from the Code of Ethics.
		\item The authors should make sure to preserve anonymity (e.g., if there is a special consideration due to laws or regulations in their jurisdiction).
	\end{itemize}

	\item {\bf Broader impacts}
	\item[] Question: Does the paper discuss both potential positive societal impacts and negative societal impacts of the work performed?
	\item[] Answer: \answerYes{} % Replace by \answerYes{}, \answerNo{}, or \answerNA{}.
	\item[] Justification: We discuss the potential positive and negative societal impacts of our work in Appendix \ref{impact}.
	\item[] Guidelines:
	\begin{itemize}
		\item The answer \answerNA{} means that there is no societal impact of the work performed.
		\item If the authors answer \answerNA{} or \answerNo, they should explain why their work has no societal impact or why the paper does not address societal impact.
		\item Examples of negative societal impacts include potential malicious or unintended uses (e.g., disinformation, generating fake profiles, surveillance), fairness considerations (e.g., deployment of technologies that could make decisions that unfairly impact specific groups), privacy considerations, and security considerations.
		\item The conference expects that many papers will be foundational research and not tied to particular applications, let alone deployments. However, if there is a direct path to any negative applications, the authors should point it out. For example, it is legitimate to point out that an improvement in the quality of generative models could be used to generate Deepfakes for disinformation. On the other hand, it is not needed to point out that a generic algorithm for optimizing neural networks could enable people to train models that generate Deepfakes faster.
		\item The authors should consider possible harms that could arise when the technology is being used as intended and functioning correctly, harms that could arise when the technology is being used as intended but gives incorrect results, and harms following from (intentional or unintentional) misuse of the technology.
		\item If there are negative societal impacts, the authors could also discuss possible mitigation strategies (e.g., gated release of models, providing defenses in addition to attacks, mechanisms for monitoring misuse, mechanisms to monitor how a system learns from feedback over time, improving the efficiency and accessibility of ML).
	\end{itemize}
	
	\item {\bf Safeguards}
	\item[] Question: Does the paper describe safeguards that have been put in place for responsible release of data or models that have a high risk for misuse (e.g., pre-trained language models, image generators, or scraped datasets)?
	\item[] Answer: \answerNA{} % Replace by \answerYes{}, \answerNo{}, or \answerNA{}.
	\item[] Justification: We use only existing, publicly available datasets.
	\item[] Guidelines:
	\begin{itemize}
		\item The answer \answerNA{} means that the paper poses no such risks.
		\item Released models that have a high risk for misuse or dual-use should be released with necessary safeguards to allow for controlled use of the model, for example by requiring that users adhere to usage guidelines or restrictions to access the model or implementing safety filters. 
		\item Datasets that have been scraped from the Internet could pose safety risks. The authors should describe how they avoided releasing unsafe images.
		\item We recognize that providing effective safeguards is challenging, and many papers do not require this, but we encourage authors to take this into account and make a best faith effort.
	\end{itemize}
	
	\item {\bf Licenses for existing assets}
	\item[] Question: Are the creators or original owners of assets (e.g., code, data, models), used in the paper, properly credited and are the license and terms of use explicitly mentioned and properly respected?
	\item[] Answer: \answerYes{} % Replace by \answerYes{}, \answerNo{}, or \answerNA{}.
	\item[] Justification: All existing assets (datasets and backbone models) are properly cited in Section 4. And we use them only for academic research in compliance with their original licenses and terms of use.
	\item[] Guidelines:
	\begin{itemize}
		\item The answer \answerNA{} means that the paper does not use existing assets.
		\item The authors should cite the original paper that produced the code package or dataset.
		\item The authors should state which version of the asset is used and, if possible, include a URL.
		\item The name of the license (e.g., CC-BY 4.0) should be included for each asset.
		\item For scraped data from a particular source (e.g., website), the copyright and terms of service of that source should be provided.
		\item If assets are released, the license, copyright information, and terms of use in the package should be provided. For popular datasets, \url{paperswithcode.com/datasets} has curated licenses for some datasets. Their licensing guide can help determine the license of a dataset.
		\item For existing datasets that are re-packaged, both the original license and the license of the derived asset (if it has changed) should be provided.
		\item If this information is not available online, the authors are encouraged to reach out to the asset's creators.
	\end{itemize}
	
	\item {\bf New assets}
	\item[] Question: Are new assets introduced in the paper well documented and is the documentation provided alongside the assets?
	\item[] Answer: \answerNA{} % Replace by \answerYes{}, \answerNo{}, or \answerNA{}.
	\item[] Justification: This paper does not release new datasets or code assets as part of the current submission.
	\item[] Guidelines:
	\begin{itemize}
		\item The answer \answerNA{} means that the paper does not release new assets.
		\item Researchers should communicate the details of the dataset\slash code\slash model as part of their submissions via structured templates. This includes details about training, license, limitations, etc. 
		\item The paper should discuss whether and how consent was obtained from people whose asset is used.
		\item At submission time, remember to anonymize your assets (if applicable). You can either create an anonymized URL or include an anonymized zip file.
	\end{itemize}
	
	\item {\bf Crowdsourcing and research with human subjects}
	\item[] Question: For crowdsourcing experiments and research with human subjects, does the paper include the full text of instructions given to participants and screenshots, if applicable, as well as details about compensation (if any)? 
	\item[] Answer: \answerNA{} % Replace by \answerYes{}, \answerNo{}, or \answerNA{}.
	\item[] Justification: This work does not involve crowdsourcing nor research with human subjects.
	\item[] Guidelines:
	\begin{itemize}
		\item The answer \answerNA{} means that the paper does not involve crowdsourcing nor research with human subjects.
		\item Including this information in the supplemental material is fine, but if the main contribution of the paper involves human subjects, then as much detail as possible should be included in the main paper. 
		\item According to the NeurIPS Code of Ethics, workers involved in data collection, curation, or other labor should be paid at least the minimum wage in the country of the data collector. 
	\end{itemize}
	
	\item {\bf Institutional review board (IRB) approvals or equivalent for research with human subjects}
	\item[] Question: Does the paper describe potential risks incurred by study participants, whether such risks were disclosed to the subjects, and whether Institutional Review Board (IRB) approvals (or an equivalent approval/review based on the requirements of your country or institution) were obtained?
	\item[] Answer: \answerNA{} % Replace by \answerYes{}, \answerNo{}, or \answerNA{}.
	\item[] Justification: This research does not involve human subjects.
	\item[] Guidelines:
	\begin{itemize}
		\item The answer \answerNA{} means that the paper does not involve crowdsourcing nor research with human subjects.
		\item Depending on the country in which research is conducted, IRB approval (or equivalent) may be required for any human subjects research. If you obtained IRB approval, you should clearly state this in the paper. 
		\item We recognize that the procedures for this may vary significantly between institutions and locations, and we expect authors to adhere to the NeurIPS Code of Ethics and the guidelines for their institution. 
		\item For initial submissions, do not include any information that would break anonymity (if applicable), such as the institution conducting the review.
	\end{itemize}
	
	\item {\bf Declaration of LLM usage}
	\item[] Question: Does the paper describe the usage of LLMs if it is an important, original, or non-standard component of the core methods in this research? Note that if the LLM is used only for writing, editing, or formatting purposes and does \emph{not} impact the core methodology, scientific rigor, or originality of the research, declaration is not required.
	%this research? 
	\item[] Answer: \answerNA{} % Replace by \answerYes{}, \answerNo{}, or \answerNA{}.
	\item[] Justification: The core method in this research does not invovle LLMs as an important, original, or non-standard component.
	\item[] Guidelines:
	\begin{itemize}
		\item The answer \answerNA{} means that the core method development in this research does not involve LLMs as any important, original, or non-standard components.
		\item Please refer to our LLM policy in the NeurIPS handbook for what should or should not be described.
	\end{itemize}
	
\end{enumerate}